\setlist{noitemsep}
\renewcommand{\cite}[1]{\citep{#1}}
\newcommand{\patrick}[1]{\todo[inline,color=pink]{Patrick: #1}}
\newcommand{\warren}[1]{\todo[inline,color=cyan]{Warren: #1}}
\newcommand{\renate}[1]{\todo[inline,color=orange]{Renate: #1}}
\newcommand{\sophie}[1]{\todo[inline,color=lime]{Sophie: #1}}
\def\showAppendix{Yes, show the appendix}
\title{Signature-Based Abduction for Expressive Description Logics -- Technical Report}
\author{
Patrick Koopmann$^1$\and
Warren Del-Pinto$^2$\and
Sophie Tourret$^3$\And
Renate A. Schmidt$^2$\\
\affiliations
$^1$Institute for Theoretical Computer Science, Technische Universit\"at Dresden, Germany\\
$^2$Department of Computer Science, The University of Manchester, United Kingdom\\
$^3$Automation of Logic Group, Max Planck Institut f\"ur Informatik, Germany
%\emails
%patrick.koopmann@tu-dresden.de, 
%warren.del-pinto@postgrad.manchester.ac.uk, 
%renate.schmidt@manchester.ac.uk,
%stourret@mpi-inf.mpg.de
}
\newtheorem{example}{Example}
\newtheorem{definition}{Definition}
\newtheorem{lemma}{Lemma}
\newtheorem{theorem}{Theorem}
\newcommand{\conf}[1]{}
\newcommand{\rep}[1]{{#1}}
\begin{document}

\newcommand{\EL}{\ensuremath{\mathcal{EL}}\xspace}
\newcommand{\ALC}{\ensuremath{\mathcal{ALC}}\xspace}
\newcommand{\ALCnu}{\ensuremath{\mathcal{ALC}\nu}\xspace}
\newcommand{\ALCU}{\ensuremath{\mathcal{ALC}^{\topRole}}\xspace}
\newcommand{\ALCtopRole}{\ensuremath{\mathcal{ALCI}^{\topRole}}\xspace}
\newcommand{\ALCHO}{\ensuremath{\mathcal{ALCHO}}\xspace}
\newcommand{\ALCImu}{\ensuremath{\mathcal{ALCI}\mu}\xspace}
\newcommand{\ALCImuTopRole}{\ensuremath{\mathcal{ALCI}\mu^\topRole}\xspace}

\newcommand{\KB}{\ensuremath{\mathcal{K}}\xspace}
\newcommand{\ont} {\ensuremath{\mathcal{O}}\xspace}
\newcommand{\hyp} {\ensuremath{\mathcal{H}}\xspace}
\newcommand{\fsig} {\ensuremath{\mathcal{F}}\space}
\newcommand{\VF} {\ensuremath{\mathcal{V}}\xspace}
\newcommand{\VR} {\ensuremath{\mathcal{V}^*}\xspace}

\newcommand{\tup}[1]{\langle #1 \rangle}
\newcommand{\reltup}[2]{({#1},{#2})}

\newcommand{\topRole}{\triangledown}

\newcommand{\ND}{\mathsf{N_D}}
\newcommand{\NT}{\mathsf{N_T}}
\newcommand{\Vmc}{\ensuremath{\mathcal{V}}\xspace}
\newcommand{\NV}{\ensuremath{\mathsf{N_V}}\xspace}

\newcommand{\basedefs}{\ensuremath{\mathsf{N_D^\Clauses}}}
\newcommand{\subd}{\ensuremath{\mathbf{D}}}

\newcommand{\cls}{\ensuremath{\varphi}}

\newcommand{\quant}{\mathsf{Q}}
\newcommand{\Sat}[1]{\ensuremath{\mathsf{Sat}(#1)}\xspace}
\newcommand{\Clauses}{\ensuremath{\Phi}\xspace}%

\newcommand{\AllClauses}{\Clauses_a}
\newcommand{\SatS}[2]{\mathsf{Sat}_{#1}(#2)}
\newcommand{\SatSg}[2]{\mathsf{Sat}^g_{#1}(#2)}

\newcommand{\SatClauses}{\ensuremath{\SatS{S}{\Clauses}}}%
\newcommand{\GroundClauses}{\SatSg{S}{\Clauses}}%
\newcommand{\Inds}{\mathbf{I}}

\newcommand{\rewriteto}[2]{[{#1}\hspace{-0.3em}\rightarrow\hspace{-0.3em}{#2}]}

\newcommand{\Wlog}{W.l.o.g.}
\newcommand{\wlg}{w.l.o.g.}

\newcommand{\LETHE}{\textsc{Lethe}\xspace}

\maketitle

\begin{abstract}
Signature-based abduction aims at building hypotheses over a specified set of names, the signature, that explain an observation relative to some background knowledge. %
This type of abduction is useful for tasks such as diagnosis, where the vocabulary used for %
observed symptoms %
differs %
from the vocabulary expected to explain those symptoms.
We present the first complete method solving signature-based abduction for observations 
expressed in the expressive description logic \ALC, which can include TBox and ABox axioms, 
thereby solving the knowledge base abduction problem.
The method is guaranteed to compute a finite and complete set of hypotheses,
and is evaluated on a set of realistic knowledge bases. 

\end{abstract}

\newcommand{\sig}[1]{\mathsf{sig}(#1)}
\newcommand{\cn}[1]{\mathsf{#1}}

\newcommand{\Sig}{\ensuremath{\Sigma}\xspace}
\newcommand{\ALCOImu}{\ensuremath{\mathcal{ALCOI}\mu}\xspace}

\patrick{Some general rules to keep the paper consistent (feel free to add more:

- I would prefer $\Sigma$ as symbol for signatures over $\mathcal{S}$, as the latter always looks like the description logic $\mathcal{S}$ to me.

- please let us use macros for all function names, for instance in $\sig{a}$. We should also make sure that we all use the same macros. Currently, we use for instance two different macros for signatures, abd and Sig.

- when referring to a role instead of a role name, it would be better to use the upper case $R$

}
\section{Introduction}%

Abduction is central to knowledge discovery and 
knowledge processing and has been
intensely studied in artificial intelligence, computer science,
cognitive science, philosophy and logic~\cite{FlachKakas00,KakasALP1992,RayXHAIL2009}.
Abduction is the process of explaining new observations using background knowledge. 
It is an important enabling mechanism for
a variety of tasks that require explanations that go beyond
what is already implied by existing knowledge, including scientific discovery, database update, 
belief expansion, diagnostics, planning,
language interpretation and inductive learning \cite{HobbsEtAl93,FlachKakas00}. %
However, in the description logic (DL) literature, abduction has received much less attention, despite being
recognised as important for ontology repair, query
update and 
matchmaking~\cite{elsenboich2006case,Wei-KleinerDragisicLambrixTBox2014,CalvaneseEtAl13,%
DiNoiaEtAl07}.

In this paper, the first complete and practical approach for a variant of abduction we call 
\emph{signature-based abduction} is presented, which solves this problem for inputs expressed in 
the expressive DL \ALC. In general, the abduction problem considers an 
observation and a knowledge base (KB), and we are looking for an extension of the KB, called 
\emph{hypothesis}, that would logically imply the observation. Without further constraints, this 
definition is too underspecified to be of practical use, as there may be many such extensions, 
including the observation itself. In signature-based abduction, we restrict the space of solutions 
by a set of names, the \emph{signature of abducibles}: any hypothesis has to be constructed using 
only names from this set. This restriction makes sense since in many applications of 
abduction, the vocabulary that we  
expect in an observation differs from the one we would wish for in a helpful explanation.
To illustrate, %
consider the following %
KB~$\KB$: %
\begin{align}
&\cn{EbolaPatient}\equiv\cn{Patient}\sqcap\exists\cn{infectedBy}.\cn{Ebola}\\
&\exists\cn{contactWith}.\cn{EbolaBat}\sqsubseteq
\cn{EbolaPatient}\\
&\cn{EbolaPatient}\sqsubseteq\forall\cn{infected}.\cn{EbolaPatient}\label{align:ex-cyclic}\\
&\cn{EbolaPatient}(p_1)
\end{align}
It states that an $\cn{EbolaPatient}$ is a patient infected by Ebola~(1),
individuals that were in contact with a bat carrying Ebola have Ebola~(2),
individuals infected by an Ebola patient have Ebola~(3), and individual $p_1$ is an Ebola patient~(4).
Suppose it is now observed that patient $p_2$ also has Ebola; that is, we want to explain the 
observation $\cn{EbolaPatient}(p_2)$.
A sufficient extension of $\KB$ to imply this observation is 
$\{\cn{Patient}(p_2)$, $\exists\cn{infectedBy}.\cn{Ebola}(p_2)\}$. However, these axioms do not 
really explain anything. 

To obtain a more meaningful answer, %
our method constrains the
explanations to a specified signature related to known causes of Ebola: 
$$\Sig=\{\cn{EbolaBat},\cn{infected},\cn{contactWith}\}.$$
An explanation using only names from $\Sig$ is for example:%
\begin{align*}
 \exists\cn{contactWith}.\cn{EbolaBat}(p_2)
 \vee
 \cn{infected}(p_1,p_2),
\end{align*}
stating that either $p_2$ had contact with an Ebola bat, or $p_1$ infected $p_2$.
\newcommand{\Omc}{\ensuremath{\mathcal{O}}\xspace}%
Note that we allow names in the signature to be arbitrarily combined using the constructs of the 
language. Furthermore, we allow for the use of \emph{disjunction}, which in our setting does not 
trivialise the problem, but allows for capturing more than one hypothesis in one solution of the 
abduction problem. Specifically, we are interested in a hypothesis that generalises every possible 
hypothesis, and is thus semantically minimal among all possible solutions. 
Ideally, we would present 
the optimal hypothesis as a disjunction of hypotheses that are \emph{independent} of each other, 
in the sense that there are no logical relations between the hypotheses. This is not always 
possible in \ALC, even when extended with disjunctions: as we show, in general, also  
nominals, inverse roles and fixpoints can be required by such a solution. 

\newcommand{\Amc}{\ensuremath{\mathcal{A}}\xspace}
Constraining hypotheses to those using only a set of allowed names, called \emph{abducibles}, is a 
long-standing practice in 
abductive logic programming~\cite{KakasALP1992,RayXHAIL2009}, and has recently also been 
investigated for first-order logic~\cite{echenim2017prime,echenim2018generic}.
In the domain of DL knowledge bases, most research either does not consider abducibles, or 
only in restricted forms. 

\citet{elsenboich2006case} motivate abduction in DL ontologies through several use cases, give 
desirable properties of a sensible abduction operator, and distinguish between \emph{ABox, TBox and 
knowledge base (KB) abduction}. 
The example above is an \emph{ABox abduction} problem, where the observation
and the hypotheses consist of facts about specific individuals.
Methods for ABox abduction without signature-restriction were investigated by 
\citet{KlarmanABox2011} and \citet{HallandBritzABox2012}, where the hypotheses computed cannot use 
concept disjunction and
negation is restricted to concept names.
Based on these methods, \citet{PukancovaHomola2017,PukancovaHomola2018} proposed an approach for 
ABox abduction %
relying on minimal hitting sets. %
In ontology repair, often it is interesting to explain not only facts (ABox assertions), but also 
terminological axioms (TBox axioms). %
\emph{TBox abduction} has been studied by \citet{Wei-KleinerDragisicLambrixTBox2014} and 
\citet{du2017practical}, though %
not the signature-based variant. 

Abducibles  were considered in~\citet{bienvenu2008complexity} %
for
the light-weight DL $\EL$, but there hypotheses can only be conjunctions over those abducibles, 
and more complex combinations are not possible. In~\cite{CalvaneseEtAl13}, signature-based ABox 
abduction for DL-Lite is investigated theoretically. Different to our work, solutions consider only 
flat ABoxes without complex concepts, but may introduce fresh individual names.
The first work that uses signature-based abduction as considered
here is by \citet{Del-PintoSchmidt2019} for ABox abduction in \ALC,
but with some restrictions: i) the observation can only consider a
single individual,
and ii) the set of abducibles has to contain all role names.
Our approach allows both TBoxes, ABoxes and mixed observations, and poses no restrictions on the 
signature, and is thus the first work that solves signature-based abduction in the most general 
form. This means that both observations and explanations can be in mixed form. Consider the knowledge base 
$\{A \sqsubseteq \forall r.B, \lnot C(b)\}$, 
observation $\forall r.C(a)$ and signature $\Sig = \{A, B, C\}$.
The solution returned by the approach in this paper is $\{A(a),
B \sqsubseteq C\}$ containing both an ABox and a TBox axiom.
Finding abduction solutions is often implemented as inverse deduction.
While deduction infers consequences from a given set of
premises, abduction infers missing premises from
which the input observations are consequences relative to some background knowledge. 
In signature-based abduction, the aim is finding the \emph{most general
set of hypotheses} over the given signature that entails the input
observations.
This is the reverse of looking for the \emph{most specific consequences} of 
the input over a given signature, which is called the
\emph{uniform interpolation problem}. 
Since these are dual notions~\cite{Lin2001}, in principle abduction
and uniform interpolation problems
reduce to each other via negation.
Specifically, for a KB $\KB$ and observation~$\Psi$, we may perform signature-based abduction by 
computing a 
uniform interpolant for $\KB\wedge\neg\Psi$.
This idea has been used for signature-based abduction
in first-order logic using second-order quantifier
elimination~\cite{DohertyLukasSzalas2001,GabbaySchmidtSzalas2008},
a concept closely related to uniform interpolation.
However, so far complete methods for uniform
interpolation only exist for DLs that are not closed under
negation~\cite{KoopmannSchmidt2014,FORGETTING-YIZHENG,FOUNDATIONS-EXPRESSIVE-UI}.
Hence we cannot directly use existing approaches to uniform
interpolation, but must develop new methods that can deal
with negated KBs. This is why the method by~\citet{Del-PintoSchmidt2019}, which uses the 
existing uniform interpolation tool \LETHE~\cite{LETHE}, only supports a restricted abduction 
setting.

It turns out the seemingly small extension of adding negation brings significant new
challenges to uniform interpolation, which can already be seen from the fact that abduction 
solutions may require a more 
expressive DL (disjunctive $\ALCOImu$) than uniform interpolants (disjunctive 
$\ALCnu$)~\cite{KoopmannSchmidt2014}. 
But even with a uniform interpolation method that solves this issue, there remains a practical 
problem. Uniform interpolation for \ALC is challenging, as solutions can in the 
worst case be of size triple-exponential in the size of the input~\cite{FOUNDATIONS-EXPRESSIVE-UI}. 
This challenge does impact practical implementations, which usually only perform well with 
signatures that are either very small or very large~\cite{KoopmannSchmidt2014,FORGETTING-YIZHENG,ChenAlghamdiEtAl19a}. Moreover, as we would use 
$\KB\wedge\neg\Psi$ as 
input, most of the computed consequences would only depend on $\KB$ and thus have no relevance to 
the abduction problem of explaining $\Psi$.
In practice, we expect $\Psi$ to be considerably smaller than $\KB$, so that the majority of 
computed consequences would be useless.
In~\cite{Del-PintoSchmidt2019}, irrelevant consequences are 
removed using 
a filtering
technique and post-processing. A more efficient solution, essential for larger KBs,
is to not compute irrelevant consequences in the first place.

In this paper, we solve the signature-based abduction problem for \ALC using ideas from the uniform 
interpolation method for \ALC KBs presented in~\cite{KoopmannSchmidt2014}. This method is 
generalised to deal with Boolean \ALC KBs as input, using a modification of 
the \emph{set-of-support strategy}~\cite{plaisted1994search} to prune irrelevant inferences. The 
method by~\citet{KoopmannSchmidt2014} uses a resolution-based calculus to compute relevant 
inferences for a given signature. Specific to this approach is that new concept names are introduced 
during the process, so-called definers, which are eliminated using simple unfolding operations in 
a final post-processing step. In our setting, the problem becomes more challenging, leading to a 
more complex resolution calculus and a more involved definer elimination step during which inverse 
roles and nominals are introduced. The extended calculus and the generalised form of the input lead 
to special challenges also in the implementation, which for practicality has to determine required 
rule applications in a goal-oriented way. Further post-processing is applied to compute abduction 
solutions that are not only semantically minimal, but are also presented in the natural form of a 
disjunction of alternative hypotheses, which are \emph{independent} in the sense that no hypothesis 
is logically implied by another. 

To summarise, the contributions of this paper are:
We solve the signature-based KB abduction problem for \ALC in the most general form, allowing for 
arbitrary signatures, as well as KBs, as input. We establish a minimal extension of \ALC that is 
sufficient to cover the set of all possible solutions to a signature-based abduction problem in a 
single disjunction of KBs. We extend the resolution-calculus in~\cite{KoopmannSchmidt2014} to deal 
with Boolean \ALC-KBs. We devise a modification of the set-of-support strategy that can be used 
with this calculus. We develop a method to compute solutions to the signature-based abduction 
problem that are semantically minimal in the sense considered here. 
We introduce techniques to avoid irrelevant explanations and remove dependencies between 
explanations.
We implemented and evaluated the method on a corpus of realistic
ontologies, finding that solutions can be computed in practice and
usually consist of few small hypotheses.

\conf{
Additional details and proofs are provided in the extended version of the paper~\cite{TR}.
}
\section{Description Logic Knowledge Bases}%
\label{sec:preliminaries}

\newcommand{\Kmc}{\ensuremath{\mathcal{K}}\xspace}
\newcommand{\Imc}{\ensuremath{\mathcal{I}}\xspace}
\newcommand{\NC}{\ensuremath{\mathsf{N_C}}\xspace}
\newcommand{\NR}{\ensuremath{\mathsf{N_R}}\xspace}
\newcommand{\NI}{\ensuremath{\mathsf{N_I}}\xspace}

\newcommand{\ALCOImuTopRole}{\ensuremath{\ALCOImu^\topRole}\xspace}

We recall the DLs \ALC, \ALCOImu, and \ALCOImuTopRole relevant to this paper~\cite{DL-HANDBOOK}. Let $\NV$, $\NC$, $\NR$ and $\NI$ be pair-wise disjoint, countably infinite sets of respectively \emph{concept variable-}, \emph{concept}-, \emph{role}- and \emph{individual names}. A \emph{role} is an expression of the form $r$, $r^-$ (inverse role) or $\topRole$ (universal role), where $r\in\NR$. \ALCOImuTopRole concepts are then built according to the following syntax rule, where $A\in\NC$, $X\in\NV$, $a\in\NI$ and $R$ is a role:
\[
 C ::= A \mid \{a\} \mid X \mid \neg C \mid C\sqcup C \mid \exists R.C \mid \mu X.C.
\]
We additionally require that for \emph{least fixpoint concepts} $\mu X.C$, $X$ occurs in $C$ only 
under an even number of negations ($\neg$). This is a standard requirement 
to ensure that the semantics of the least fixpoint concept is always 
well-defined~\cite{CalvaneseGL99}.
We use $C_1[C_2\mapsto C_3]$ to denote the result of replacing in $C_1$ every $C_2$ by $C_3$. 
Further concepts are introduced as abbreviations: $\top=(A\sqcup\neg A)$ (for an arbitrary $A\in\NC$), $\bot=\neg\top$, $C_1\sqcap C_2=\neg(\neg C_1\sqcup\neg C_2)$, $\forall R.C=\neg\exists R.\neg C$ and $\nu X.C[X]=\neg \mu X.\neg C[X\mapsto\neg X]$. 

A concept is \emph{closed} if every concept variable name $X$ occurs in the scope of the least 
fixpoint operator $\mu X$. \emph{Knowledge bases} are sets of \emph{concept inclusions} (CIs) of the 
form $C_1\sqsubseteq C_2$ and \emph{assertions} of the form $C_1(a)$, $r(a,b)$, where $C_1$, $C_2$ 
are closed concepts, $a,b\in\NI$ and $r\in\NR$. CIs and assertions are collectively called 
\emph{axioms}. 

A \emph{Boolean KB} is built according to the syntax rule $$\Kmc ::= \alpha \mid 
\neg\Kmc \mid \Kmc\wedge\Kmc \mid \Kmc\vee\Kmc,$$
where $\alpha$ is an axiom. We identify each KB with the Boolean KB that is the conjunction of all 
its axioms. 
If a (Boolean) KB/concept does not use the universal role, it is in \ALCOImu, if it does not use 
inverse roles, fixpoint operators and \emph{nominals} $\{a\}$, it is in \ALC. For an expression 
$E$, 
$\sig{E}$ denotes the concept and role names occurring in $E$. For a signature 
$\Sig\subseteq\NC\cup\NR$, a \emph{$\Sig$-axiom} is an axiom $\alpha$ with 
$\sig{\alpha}\subseteq\Sig$.

The semantics is defined in terms of \emph{interpretations} $\Imc=\tup{\Delta^\Imc,\cdot^\Imc, 
\cdot^{\Imc,\cdot}}$, with $\Delta^\Imc$ a set of \emph{domain elements} and $\cdot^\Imc$ the 
\emph{interpretation function} mapping every $a\in\NI$ to some $a^\Imc\in\Delta^\Imc$, every 
$A\in\NC$ to some $A^\Imc\subseteq\Delta^\Imc$, every $r\in\NR$ to some 
$r\subseteq\Delta^\Imc\times\Delta^\Imc$, and is extended to inverse and universal roles by $(r^-)^\Imc=(r^\Imc)^-$ and 
$\topRole^\Imc=\Delta^\Imc\times\Delta^\Imc$. A \emph{valuation for $\Imc$} is a 
function $\Vmc:\NV\rightarrow 2^{\Delta^\Imc}$. Given a valuation $\Vmc$ for $\Imc$, $X\in\NV$ and 
$W\subseteq\Delta^\Imc$, $\Vmc[X\mapsto W]$ is the valuation identical to $\Vmc$ except $\Vmc(X)= 
W$. Concepts $C$ and valuations $\Vmc$ are mapped to subsets of $C^{\Imc,\Vmc}\subseteq\Delta^\Imc$ 
by 
\begin{align*}
 X^{\Imc,\Vmc}            &= \Vmc(X), \quad
 A^{\Imc,\Vmc}             =A^\Imc, \quad 
 \{a\}^{\Imc,\Vmc} = \{a^\Imc\}, \\
 (\neg C)^{\Imc,\Vmc}      &= \Delta^\Imc\setminus C^{\Imc,\Vmc}, \quad
 (C\sqcap D)^{\Imc,\Vmc}   = C^{\Imc,\Vmc}\cap D^{\Imc,\Vmc}, \\
 (\exists R.C)^{\Imc,\Vmc} &= \{d\in\Delta^\Imc\mid \exists \reltup{d}{e}\in R^\Imc : e\in C^{\Imc,\Vmc}\},\text{ and} \\
 (\mu X.C)^{\Imc,\Vmc}     &= 
 \bigcap\{W\subseteq\Delta^\Imc \mid C^{\Imc,\Vmc[X\mapsto W]}\subseteq W\} .
\end{align*}
Intuitively, concepts $\mu X.C[X]$ are equivalent to an unbounded disjunction of concepts:
\[
 C[\bot]\sqcup C[C[\bot]] \sqcup C[C[C[\bot]]] \sqcup C[C[C[C[\bot]]]]\sqcup\ldots .
\]
$(\mu X.C)^{\Imc,\Vmc}$ is independent of the value of $\Vmc(X)$. Thus, for closed concepts $C$, 
$C^{\Imc,\Vmc}$ is independent of $\Vmc$. We extend~$\cdot^\Imc$ to closed concepts by setting 
$C^\Imc=C^{\Imc,\Vmc}$ for an arbitrary $\Vmc$.

We define \emph{satisfaction of Boolean KBs $\Kmc$ in \Imc}, in symbols $\Imc\models\Kmc$, by $\Imc\models C\sqsubseteq D$ iff $C^\Imc\subseteq D^\Imc$, $\Imc\models C(a)$ if $a^\Imc\in C^\Imc$, $\Imc\models r(a,b)$ if $\reltup{a}{b}\in r^\Imc$, $\Imc\models\neg\Kmc$ if $\Imc\not\models\Kmc$, $\Imc\models\Kmc_1\wedge\Kmc_2$ if $\Imc\models\Kmc_1$ and $\Imc\models\Kmc_2$, and $\Imc\models\Kmc_1\vee\Kmc_2$ if $\Imc\models\Kmc_1$ or $\Imc\models\Kmc_2$. We then also say $\Imc$ is a model of $\Kmc$. We write $\Kmc_1\models\Kmc_2$ if every model of $\Kmc_1$ is a model of $\Kmc_2$.

\section{Signature-Based Abduction}%
\label{sec:abduction-problem}

Our aim is to produce the least assumptive hypothesis within a given signature of allowed \emph{abducible} names. 

\begin{definition}\label{KBAbductionProblem}
Let $\Kmc$ be an $\ALC$ KB (the \emph{background knowledge}), $\Psi$ a set of CIs and assertions in \ALC (the \emph{observation}), and 
$\Sig$ a signature (the set of \emph{abducibles}).
The \emph{signature-based abduction problem} $\tup{\Kmc,\Psi,\Sig}$ is then to compute a hypothesis $\hyp = \bigvee_{i=1}^n \KB_i$ %
that satisfies all of the following conditions:
\begin{enumerate}[%
label=\textbf{A\arabic*},leftmargin=*]
\item\label{en:consistent} $\KB\wedge \hyp \not\models \perp$,
\item\label{en:explains} $\KB\wedge \hyp \models \Psi$,
\item\label{en:signature} $\sig{\hyp} \subseteq \Sig$,
\item\label{en:semantic-minimality} for any \ALC KB $\hyp'$ satisfying~\ref{en:explains} and~\ref{en:signature}, %
$\KB\wedge\hyp'\models\hyp$.
\end{enumerate}
If $\hyp$ satisfies Conditions~\ref{en:consistent}, ~\ref{en:explains} and~\ref{en:signature}, 
it is a  \emph{hypothesis for $\Psi$ in $\Sig$}. If it additionally satisfies 
Condition~\ref{en:semantic-minimality}, it is an \emph{optimal hypothesis}.
\end{definition}
\patrick{Splitting the definition makes it easier in the following examples. I think it is possible to reformulate the definition also to make it shorter.}
Conditions~\ref{en:consistent} and ~\ref{en:explains} are the standard conditions for the 
abduction problem: the hypothesis should be consistent with the background knowledge and 
should 
\emph{explain} the observation. Condition~\ref{en:signature} is what makes this a 
\emph{signature-based} abduction problem. Condition~\ref{en:semantic-minimality} finally 
requires 
the solution to be \emph{optimal}, in the sense that every possible explanation in \ALC is covered.
This is often captured by the notion of \emph{semantic minimality}: \hyp should be the most general hypothesis, the one that makes the least assumptions, among all possible hypotheses.
Represented as a disjunction, \hyp can be seen as a collection of possible hypotheses for the observation.
The notion of semantic minimality is often considered without disjunctions 
\cite{KlarmanABox2011,HallandBritzABox2012}. Note that in the current case, allowing for disjunctions 
does not trivialise the problem, as the signature of abducibles restricts the space of solutions. 
The problem of finding a hypothesis using a \emph{minimal} number of \emph{independent} solutions, 
i.e., using only disjunctions where necessary, is considered in Sect.~\ref{sec:redundancy}.

It is possible that the only KBs that satisfy 
Conditions~\ref{en:explains}--\ref{en:signature} are inconsistent. In this case, due to 
Condition~\ref{en:consistent}, the signature-based abduction problem has no 
solution. In all other cases, an optimal hypothesis exists and is computed by our method. 
Condition~\ref{en:semantic-minimality} ensures that, if there is a hypothesis, there is also an 
optimal solution unique up to logical equivalence.

Condition~\ref{en:semantic-minimality} only refers to hypotheses in \ALC, and both observation and 
explanation are expressed in \ALC. However, for a single hypothesis to cover every possible 
hypothesis, further expressivity might be needed, as the set of possible hypotheses might be 
infinite. The first reason is that additional individuals may play a role in a hypothesis, the 
second reason is that the ontology may entail cyclic CIs. As a consequence, optimal hypotheses might 
need the use of inverse roles, nominals, or fixpoint expressions.
We illustrate the need for further expressivity with the example from the introduction. 

\paragraph{Inverse Roles.} For every individual name $a\in\NI$, the following KB is a hypothesis 
in \ALC for the observation $\cn{EbolaPatient}(p_2)$ :
\[
 \{\ \exists\cn{contactWith}.\cn{EbolaBat}(a), \  
 \cn{infected}(a,p_2)\ \}.
\]
This makes one hypothesis for each individual in the countably infinite set~$\NI$. 
One way around this problem could be to alter the requirements in Def.~\ref{KBAbductionProblem}, by 
enforcing individual names to be taken from a finite set.
But then we would have a large number of solutions that are all identical modulo renaming of 
individual names, 
which is neither convenient nor insightful. 
Another solution is to use inverse roles, which allow coverage of all of the above hypotheses:
\[
 \big(\exists\cn{infected}^-.\exists\cn{contactWith}.\cn{EbolaBat}\big)(p_2).
\]

\paragraph{Nominals.} 
A similar problem 
occurs 
when additional 
individuals can connect two individuals from the observation and background knowledge. Hypotheses 
such as
\[
 \{\ \cn{infected}(p_1,a),\ \cn{infected}(a,p_2) \},
\]
where $a\in\NI$, cannot be captured using inverse roles alone, but can using nominals:
\[
 \big(\exists\cn{infected}.\exists\cn{infected}.\{p_2\}\big)(p_1).
\]

\paragraph{Least Fixpoints.} Finally, due to the cyclic axiom~\eqref{align:ex-cyclic}, any chain of 
infections connecting a known Ebola patient to $p_2$ is a valid hypothesis. To cover all of these
unbounded chains,
we use the least fixpoint operator. %
Using all three constructs, we can represent the optimal hypothesis as follows:
\begin{align*}
\mu X.\big(&\exists\cn{contactWith}.\cn{EbolaBat}\\
{}&\sqcup \exists\cn{infected}^-.\{p_1\}\sqcup\exists\cn{infected}^-.X\big)(p_2),
\end{align*}
which could be read as: ``\emph{One the following happened: 1)~$p_2$~was in contact with an Ebola carrying bat, 2) $p_2$ was infected by $p_1$, or 3) $p_2$ was infected by someone else to whom Conditions~1--3 apply.}''
\patrick{Is this readable or should we formulate this differently?}

From an optimal hypothesis with least fixpoints, explanations without fixpoints can be easily 
obtained by unfolding. Furthermore, inverse roles and nominals are only needed under existential 
role restrictions, so that we can often reconstruct the hypotheses without those constructs using 
additional individual names. We note that while least fixpoints are not supported by standard DL 
reasoners, satisfiability of KBs with \emph{greatest} fixpoints can be decided by using auxiliary 
concept names~\cite{KoopmannSchmidt2014}. 
The same technique can be used to decide entailment of KBs with least fixpoints.

\section{Computing Optimal Hypotheses} %
\label{sec:forgetting-method}

\renate{The difference to the previous approach is not only the presence of universal roles?
Can we describe more precisely the formulae that are manipulated by the
new calculus?}

The general idea for solving abduction problems $\tup{\Kmc,\Psi,\Sigma}$ is to consider the Boolean 
KB $\Kmc\wedge\neg\Psi$, and eliminate the names outside of $\Sigma$ similar to how it is done in 
the resolution-based method for uniform interpolation for \ALC-KBs presented 
in~\cite{KoopmannSchmidt2014}. The resulting KB can then be used to construct a hypothesis by 
negating it again---if this hypothesis is consistent with $\Kmc$, it is an optimal abductive 
explanation, and otherwise, such a hypothesis does not exist.

In uniform interpolation, the aim is to compute, for a given KB~$\Kmc$ and signature $\Sigma$, a KB 
$\Kmc_\Sigma$ such that $\sig{\Kmc_\Sigma}\subseteq\Sigma$ and for 
every $\Sigma$-axiom 
$\alpha$, $\Kmc\models\alpha$ iff $\Kmc_\Sigma\models\alpha$. This KB~$\Kmc_\Sigma$ is computed 
in~\cite{KoopmannSchmidt2014} by eliminating each name in $\sig{\Kmc}\setminus\Sigma$ one after the 
other. To eliminate a name, a set of rules is used to perform all relevant inferences on that name. 
During this process, new names called \emph{definers} are introduced. After a name 
$\sig{\Kmc}\setminus\Sigma$ has been successfully eliminated, these definers are eliminated again
using simple rewriting rules. 

While the general structure of our method is similar, there are several additional challenges we 
have to address.
\begin{enumerate}
 \item In order to support \emph{Boolean} KBs, the calculus in~\cite{KoopmannSchmidt2014} has to be supplemented with further rules.
\item
Existing methods for uniform interpolation perform best when the
signature or its complement relative to the signature of the ontology are
small~\cite{KoopmannSchmidt2014,FORGETTING-YIZHENG,ChenAlghamdiEtAl19a}. For signature-based
abduction, we exploit the fact that the observation $\Psi$ is usually
small compared to $\Kmc$. Our method focuses on inferences relevant to
$\neg\Psi$ using a modified set-of-support strategy. Effectively, this
means we eliminate names in $\neg\Psi$, but not in $\Kmc$.
 \item In uniform interpolation, we can eliminate names one after the other and do not have to 
reconsider previously eliminated names, which means we can encapsulate the elimination of each name
without affecting termination. For abduction, this approach would cause a termination problem, 
because nothing is eliminated in $\Kmc$, and previously eliminated names might 
get propagated back into the hypothesis. Thus, a more integrated approach is needed.
 \item Eliminating definers turns out to be more challenging in our setting, and a simple 
set of rewriting rules is not sufficient anymore. In fact, it is only in this step that nominals 
and inverse roles are introduced. For uniform interpolation of $\ALC$-KBs, disjunction and 
fixpoints are the only required language extension.
 \item Finally, we want to compute hypotheses that have the form of a disjunction of alternative 
hypotheses in $\ALCOImu$. It turns out that for the calculus, it is easier to compute the optimal 
hypothesis first in $\ALCOImuTopRole$ and then reformulate it into a 
disjunction of $\ALCOImu$-KBs.
\end{enumerate}
 Our method to solve 
 the abduction problem $\tup{\Kmc,\Psi,\Sig}$ proceeds using four steps, which we describe in turn below.
\begin{enumerate}[left=0pt,%
label=\textbf{Step \arabic*},leftmargin=*]%
 \item Transform $\Kmc\wedge\neg\Psi$ into clausal normal form.
\label{step_one}
 \item Eliminate all names outside $\Sig$ using the calculus.
\label{step_two}
 \item Express the resulting clause set as a Boolean KB. 
\label{step_three}
 \item Negate the result and eliminate universal roles.
\label{step_four}
\end{enumerate}

\subsection{Step 1: Normalisation}
Our method uses the following clausal normal form.
\begin{definition}[Normal form]
\label{def:normform}
 Let $\ND\subset\NC$ be a special set of concept names, called \emph{definers}, and 
$\NT=\NI\cup\{x\}$ be the set of \emph{terms}, where $x$ is a fresh symbol referring to a 
universally quantified variable. 
\emph{Literals} are built according to the following syntax rule:
\[
 L::= A(t) \mid \neg A(t) \mid \quant r.D(t)\mid r(a,b) \mid \neg r(a,b),
\]
where $t\in\NT$, $r\in\NR\cup\{\topRole\}$, $a,b\in\NI$, $A\in\NC$, $\quant\in\{\exists,\forall\}$ and $D\in\ND$. 
A \emph{clause} $\cls$ is an expression of the form $L_1\vee\ldots\vee L_n$, where each $L_i$ is a literal and at most one literal is of the form $\neg D(x)$ where $D\in\ND$.
We treat clauses as sets of literals by ignoring the order of the literals in a clause 
and 
assuming that clauses contain no duplicates.
\end{definition}
A similar normal form is used in~\cite{KoopmannSchmidt2014}, however without 
universal roles, negated role assertions, or clauses that mix role assertions with other literals.
Clauses $L_1(x)\vee\ldots\vee L_n(x)$ are interpreted as $\top\sqsubseteq L_1\sqcup\ldots\sqcup L_n$, and clauses without variables as disjunctions of ABox assertions.
Our method never introduces clauses that mix variable and constant terms.

To make the following more convenient, we also allow clause sets and Boolean KBs to be mixed: 
Specifically, a generalised KB is a set $\Clauses\cup\{\Kmc\}$ containing a set \Clauses of clauses 
and a Boolean KB~$\Kmc$, and an interpretation $\Imc$ is a model of~$\Clauses\cup\{\Kmc\}$ if both $\Imc\models\Kmc$ 
and \mbox{$\Imc\models\Clauses$}. Entailment of axioms from generalised KBs is defined accordingly.

In the rest of this section, we adopt the following naming conventions, where additional primes, 
sub- or superscripts may be used:
$a,b\in\NI$;
$D\in\ND$;
$t\in\NT$;
$A,B\in\NC$;
$r\in\NR\cup\{\topRole\}$;
$\quant \in \{\exists,\forall\}$;
$C$ is a concept;
$L(t)$ is a literal with argument $t$; and
$\cls$ is a clause.
A \emph{definer-free} Boolean KB \Kmc is such that $\sig{\Kmc}\cap\ND = \emptyset$.

We describe how $\Kmc\wedge\neg\Psi$ is turned into a set of clauses that
preserves all definer-free entailments.
The observation~$\Psi$ is a conjunction of assertions and CIs, and the negated CI 
\mbox{$\neg(C\sqsubseteq D)$} is equivalent to $(\exists\topRole.(C\sqcap\neg D))(a)$, where~$a$ 
can be any individual.
Hence, $\neg\Psi$ can be equivalently represented as a disjunction of negated
assertions. Using standard logical laws and the equivalence between $C_1\sqsubseteq C_2$ and 
$\top\sqsubseteq\neg C_1\sqcup C_2$, we ensure that every CI has $\top$ on
the left-hand side and negation occurs only in front of concept names. Definers
are then used to replace concepts under role restrictions: every $\quant r.C$
gets replaced by $\quant r.D_C$, and we add the CI $\top\sqsubseteq \neg
D_C\sqcup C$, where $D_C$ is a definer uniquely associated with $C$. 
Using standard CNF transformations we can then make sure that
every CI is of the form $\top\sqsubseteq L_1\sqcup\ldots\sqcup L_n$, equivalent
to the clause $L_1(x)\vee\ldots\vee L_n(x)$, and that the negated observation
corresponds to a set of clauses without variables.

\begin{example}\label{ex:normalise}
Let $\Kmc=\{A_1\sqsubseteq B, B\sqsubseteq \exists r.B\}$, 
$\Psi=\{B(a), A_2\sqsubseteq \exists r.B\}$ and $\tup{\Kmc,\Psi,\Sig}$ be an abduction problem for 
some \Sig.
 We first represent $\neg\Psi$ as $\neg B(a)\vee\exists\topRole.(A_2\sqcap\neg\exists 
r.B)(a)$. 
 After normalisation, we obtain $\Clauses=\mbox{$\Clauses_\Kmc\cup\Clauses_{\neg\Psi}$}$, where 
$\Clauses_\Kmc=\{
    \neg A_1(x)\vee B(x)$, 
    \mbox{$\neg B(x)\vee\exists r.D_1(x)$}, 
    \mbox{$\neg D_1(x)\vee B(x)\}$} and 
 $\Clauses_{\neg\Psi}=\{
    \neg B(a)\vee\exists \topRole.D_2(a)$, 
    $\neg D_2(x)\vee A_2(x)$,
    $\neg D_2(x)\vee\forall r.D_3(x)$, $\neg D_3(x)\vee\neg B(x)\}$.
\end{example}

\subsection{Step 2: Elimination of Names Outside $\Sig$}
\newcommand{\SupportClauses}{\ensuremath{\Clauses_s}\xspace}

Names outside of $\Sig$ are eliminated using a resolution procedure with a special set-of-support 
strategy that 
is based on the calculus in Fig.~\ref{fig:calculus}. We first describe these rules and then discuss how 
they are used.
Rules R$A$, R$\quant$, R$\forall$-1 and R$\topRole$ are taken as is, or adapted, from the calculus 
by~\citet{KoopmannSchmidt2014} for \ALC KBs.
The other rules are new.

\begin{figure}
  \begin{framed}
 \begin{tabular}{l c }
  R$A$ &
$\dfrac{ \cls_1\vee A(t_1) \quad \cls_2 \vee \neg A(t_2)
 }{
 (\cls_1 \vee \cls_2)\sigma
 }$ 
\\
\\
R$r$ & 
$\dfrac{ \cls_1\vee r(a,b)   \quad   \cls_2\vee \neg r(a,b) }
 {\cls_1 \vee \cls_2}$
\\
\\
 R$\quant$ &
$
             \dfrac{ \cls_1 \vee (\quant r.D_{\subd_1})(t_1)  \quad \cls_2\vee (\forall r.D_{\subd_2})(t_2)
 }{
     (\cls_1\vee \cls_2\vee\quant r.D_{\subd_1\cup\subd_2}(t_1))\sigma %
 }$
\\
\\
R$\forall$-1 &
$\dfrac{ \cls_1\vee r(t_1,b)  \quad \cls_2\vee (\forall r.D)(t_2)
 }{
 (\cls_1\vee \cls_2\vee D(b))\sigma
 }
$
\\
\\
R$\forall$-2 &
$\dfrac{ \cls_1\vee \neg D(a)  \quad \cls_2\vee (\forall r.D)(b)
}{
	\cls_1\vee \cls_2\vee \neg r(b,a)
}
$
\\
\\
R$\exists$ &
$\dfrac{ \cls_1\vee(\exists r.D)(t) 
 }{
 \cls_1\vee(\exists\topRole.D)(t)
 }$
 \\
 \\
R$\topRole$ &
$\dfrac{\cls\vee(\exists\topRole.D)(t)\quad \neg D(x)}{\cls}$
 \end{tabular}
 \end{framed}

 \caption{Calculus for eliminating concept and role names. }
 \patrick{It might be possible to save space by making this into a figure spreading two columns.}
 \label{fig:calculus}
\end{figure}

A \emph{substitution} is a function replacing all occurrences of $x$ by a given term in a clause (or literal, or term) and a \emph{unifier} of two terms $t_1$ and $t_2$ is a substitution $\sigma$ s.t.\ $\sigma(t_1)=\sigma(t_2)$.
The rules R$A$, R$\quant$ and R$\forall$-1 rely on the most general unifier (mgu) $\sigma$ of $t_1$ and $t_2$, i.e., the identity if $t_1=t_2$ or a function mapping $x$ to $a\in\NI$ if one of $t_1$ and $t_2$ is $x$ and the other is~$a$.
If there is no mgu, a rule cannot be applied.
Inferences are also forbidden if the resulting clause contains more than a single literal of the form $\neg D(x)$ where $D\in\ND$. 

Let $\basedefs$ denote the set of definers introduced in $\Clauses$ in \text{\ref{step_one}}. %
Every subset $\subd$ of $\basedefs$ is mapped to a unique definer $D_\subd\in\ND$ s.t.\ $D_{\{D_1\}}=D_1$ for any $D_1\in\basedefs$.
 Intuitively, $D_\subd$ represents $D_1\sqcap\dots\sqcap D_n$ for $\subd = \{D_1,\dots,D_n\}$.
 Every time R$\quant$ is applied on definers $D_{\subd_1}$ and $D_{\subd_2}$, clauses
$\neg D_{\subd_1\cup \subd_2}(x)\vee D_{\subd_1}(x)$ and 
$\neg D_{\subd_1\cup \subd_2}(x)\vee D_{\subd_2}(x)$ are added, unless they already exist.
This can make further inferences on another concept or role name possible. 
For instance, we cannot apply R$A$ on the clauses $\neg D_1(x)\vee B(x)$ and $\neg D_3(x)\vee \neg B(x)$ in Ex.\ \ref{ex:normalise} since the resulting clause would contain two negative definers of the form $\neg D(x)$.
But if we first apply R$\quant$ on $\neg B(x)\vee\exists r.D_1(x)$ and $\neg D_2(x)\vee\forall r.D_3(x)$, this introduces the definer $D_{\{\subd_1\cup\subd_3\}}$, that we denote $D_{13}$ to lighten the notations, and the corresponding extra clauses.
After applying R$A$ on these clauses and the problematic ones, the obtained clauses $\neg D_{13}\vee B(x)$ and $\neg D_{13}\vee\neg B(x)$ can be resolved on $B$ using R$A$.%

To focus on inferences relevant to the observation,
we use an extension of the \emph{set-of-support} strategy \cite{plaisted1994search}.
The general idea of this strategy is to have a special set $\SupportClauses$ of \emph{supported 
clauses}. Inferences are then applied with the side condition that at least one premise is taken 
from $\SupportClauses$, while the newly inferred clauses are added to the set of supported clauses. 
In our context, we initialise $\Clauses$ as the normal clausal form of \Kmc; 
and~$\SupportClauses$ as the normal form of 
$\neg\Psi$. The set-of-support strategy thus makes sure that we only perform inferences that 
are connected to the observation. However, some modification to the standard set-of-support 
strategy are necessary due to the special role of definers. Intuitively, definers represent 
complex concepts under role restrictions, which have been introduced either as part of the 
normalisation step or by rule applications. When adding a new clause to $\SupportClauses$ that 
contains a definer, we have to make sure that the connections of this definer to other clauses are 
now considered in $\SupportClauses$ as well. 

To eliminate a name $S$, we perform all inferences on $S$ using the above strategy. Afterwards, we 
can remove from~$\SupportClauses$ all clauses that use $S$. It is now possible that previously 
filtered clauses containing $S$ get inferred again when eliminating a different name. To ensure 
termination, we thus have to keep track of previous inferences, for which we use the 
set~$\AllClauses$.

We now describe the algorithm used in \ref{step_two}. %
All inferences are performed under the condition that at least one premise is from 
\SupportClauses.
The set $\AllClauses$ is initialized as $\Clauses\cup\SupportClauses$. 
We then repeat the following steps for some name $S\in \sig{\SupportClauses}\setminus(\ND\cup\Sig)$ 
as long as there is such an $S$.
\begin{enumerate}[left=0pt,%
label=\textbf{F\arabic*}]
\item Perform all possible inferences on $S$, all possible R$A$ and R$\forall$-2 inferences on definers, and all R$\quant$ and R$\forall$-1 inferences that make inferences on $S$ possible.
  Add all inferred clauses not already in $\AllClauses$ to $\SupportClauses$ and~$\AllClauses$.
  \label{step:inferences}
\item Remove from~$\SupportClauses$ all clauses $\cls$ with $S\in\sig{\cls}$.
\label{step:remove-defining-clauses}
\item Repeatedly move to $\SupportClauses$ all clauses $\cls\in\Clauses$ in which a 
$D\in\sig{\SupportClauses}$ occurs until no more clauses are added.\label{step:move-clauses}
\end{enumerate}

The set $\AllClauses$ ensures the method terminates, as we may reintroduce a formerly eliminated name $S$ in~\ref{step:move-clauses}. Note that our normal form allows at most double-exponentially many distinct elements in $\AllClauses$. 
The final state of $\SupportClauses$ is denoted $\Sat{\Clauses,\SupportClauses,\Sigma}$. ($\mathsf{Sat}$ is short for saturation.)

\begin{example}\label{ex:saturate}
 Continuing with Ex.\ \ref{ex:normalise}, let us assume $\Sig_1=\{A_1,A_2,r\}$. We obtain 
 $\Sat{\Clauses,\SupportClauses,\Sig_1}=
 \{\neg A_1(a)\vee\exists\topRole.D_2(a)$, 
   $\neg D_2(x)\vee A_2(x)$,
   $\neg D_2(x)\vee \exists r.D_3(x)$,
   $\neg D_2(x)\vee \neg A_1(x)\vee\exists r.D_{13}(x)$,
   $\neg D_{13}(x)\}$. 
\end{example}

\rep{
To show that this procedure computes all relevant inferences, we use 
the refutational completeness of a modified version of our calculus. 
Given a set \Clauses of normalised clauses and a name $S\in\NC\cup\NR$, we denote by $\SatS{S}{\Clauses}$ the result of exhaustively applying the rules with the following 
additional side condition.
\begin{itemize}[label=($\ast$),leftmargin = 1.7em]
 \item If $S\in\sig{\cls}$, inferences on $\cls$ are either only applied 
on literals $L$ s.t.\ $S\in\sig{L}$, or they are applications of the R$A$-rule 
with a clause of the form $\neg D_1(x)\vee D_2(x)$. 
\end{itemize}
Intuitively, in $\SatS{S}{\Clauses}$, we first eliminate $S$ before performing any other inferences. 

\begin{lemma}\label{lem:calculus-refutational-completeness} Given a clause set \Clauses obtained by normalising a Boolean KB and some $S\in\NC\cup\NR$, $\Clauses$ is satisfiable if and only if $\SatS{S}{\Clauses}$ does not contain the empty clause.
\end{lemma}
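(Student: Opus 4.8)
The plan is to prove both directions, the $(\Rightarrow)$ direction being a routine soundness check and the $(\Leftarrow)$ direction requiring a model construction in the style of~\cite{KoopmannSchmidt2014}. For soundness I would verify that each rule in Fig.~\ref{fig:calculus} preserves all models, reading a clause $L_1\vee\ldots\vee L_n$ containing the variable $x$ as the CI $\top\sqsubseteq L_1\sqcup\ldots\sqcup L_n$ and a variable-free clause as a disjunction of assertions. Interpreting each definer $D_\subd$ as the conjunction $\bigsqcap_{D\in\subd}D$, the auxiliary clauses $\neg D_{\subd_1\cup\subd_2}(x)\vee D_{\subd_1}(x)$ and $\neg D_{\subd_1\cup\subd_2}(x)\vee D_{\subd_2}(x)$ added by R$\quant$ become valid under this reading, and every rule conclusion is entailed by its premises. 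Hence any model of \Clauses extends to a model of $\SatS{S}{\Clauses}$, and since the empty clause is unsatisfiable, $\SatS{S}{\Clauses}$ cannot contain it whenever \Clauses is satisfiable.

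For the converse I would argue by contraposition: assuming $\SatS{S}{\Clauses}$ does not contain the empty clause, I construct an interpretation \Imc satisfying it, and hence \Clauses, since the latter is a subset of $\SatS{S}{\Clauses}$ modulo the definer abbreviations. Following the canonical-model technique for this normal form, the domain consists of the individual names occurring in the variable-free clauses together with anonymous tree successors generated on demand. Each element is labelled by a maximal set of definers and concept literals consistent with the universal ($x$-)clauses; an element carrying a literal $\exists r.D_\subd$ receives an $r$-successor whose label contains $D_\subd$ together with every $D$ demanded by a $\forall r.D$ literal, the key point being that R$\quant$ has already merged $\exists r.D_{\subd_1}$ with $\forall r.D_{\subd_2}$ into $\exists r.D_{\subd_1\cup\subd_2}$, so the successor's label is already saturated and, by the absence of the empty clause, consistent. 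The rules R$r$, R$\forall$-1 and R$\forall$-2 ensure that role assertions and their interaction with $\forall$-literals are respected at named individuals, while R$\exists$ and R$\topRole$ handle existentials under the universal role $\topRole$ by grounding $\exists\topRole.D$ to some existing element.

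The delicate part is the side condition $(\ast)$, which forces every inference on a clause containing $S$ to act on an $S$-literal, the only exception being an R$A$ application with a definer-propagation clause $\neg D_1(x)\vee D_2(x)$. This turns the calculus into a refinement that eliminates $S$ eagerly, and I would justify its completeness by treating $(\ast)$ as an ordering/selection restriction in which $S$-literals are maximal: the model construction always chooses the rule-relevant (productive) literal, so any inference blocked by $(\ast)$ has a selected non-$S$-literal while $S$ still occurs, and such an inference can be shown redundant in that its conclusion already follows from smaller clauses of the saturated set. The exception for R$A$ on $\neg D_1(x)\vee D_2(x)$ is precisely what keeps the definer hierarchy consistent while $S$ is being removed, so I would check separately that definer-propagation is never postponed by $(\ast)$.

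The main obstacle I anticipate is exactly this completeness of the $(\ast)$-refinement combined with the non-standard DL rules: unlike plain ordered resolution, the rules R$\quant$, R$\forall$-1, R$\forall$-2 and R$\topRole$ manipulate role restrictions and introduce fresh definers, so the usual Bachmair--Ganzinger redundancy argument cannot be cited off the shelf but must be re-established for this normal form. In particular I expect the interplay between variable-free ABox clauses, the universal-role literals $\exists\topRole.D$, and the propagation clauses generated by R$\quant$ to require the most care, since these are precisely the features absent from the calculus of~\cite{KoopmannSchmidt2014} on which the model construction is based.
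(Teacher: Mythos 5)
Your soundness direction is correct (and more explicit than the paper, which leaves that direction implicit): reading each $D_{\subd}$ as the conjunction $D_1\sqcap\dots\sqcap D_n$ of its components makes every rule, including the propagation clauses added by R$\quant$, model-preserving up to extension of the interpretation of fresh definers. You have also correctly isolated the one idea that makes the side condition $(\ast)$ harmless: it behaves like an ordering restriction under which $S$-literals are maximal. The paper does exactly this---its literal ordering $\prec_S$ puts definer literals lowest and $S$-literals highest---so on this point you and the paper agree.

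The genuine gap is in the completeness direction, where you sketch two different constructions---a tableau-style tree model with ``maximal consistent labels'' and a Bachmair--Ganzinger-style production argument---without integrating them, and the tree-model half is circular as stated. Whether a successor label containing $D_{\subd_1\cup\subd_2}$ is ``consistent with the universal clauses'' is not a local, propositional property: the $x$-clauses contain $\forall$- and $\exists$-literals whose satisfaction depends recursively on further successors, so claiming the label is consistent ``by the absence of the empty clause'' assumes precisely what the lemma asks you to prove. The paper sidesteps trees and on-demand successors entirely: it introduces one fresh individual name $a_D$ per definer $D$ in $\SatClauses$, fixes the finite domain consisting of all individual names (original and fresh), grounds every $x$-clause over this domain, and runs the production induction of Theorem~2 of \citet{KoopmannSchmidt2014} on the resulting finite ground set under the ordering above. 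In that setup the witness for an $\exists r.D$ literal is always the pre-placed element $d_{a_D}$, which lies in $D^{\Imc_0}$ unless $\neg D(x)\in\SatClauses$; in the latter case the rules R$\exists$ and R$\topRole$ guarantee that a strictly smaller clause with the $\exists$-literal removed is already present, which keeps the induction alive. The only case genuinely new over the cited construction is $\neg r(a,b)$, discharged using the R$r$ rule. Without this finite-grounding device (or a worked-out substitute for it), your plan leaves the core of the lemma---why saturation under the $(\ast)$-restriction suffices to build a model---unproven; you acknowledge this obstacle but do not resolve it, and resolving it is most of the work.
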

\conf{
}
\rep{
\patrick{The following proof is almost identical to how the argument for my paper on forgetting with ALC-ABoxes would be - the only difference is treating literals of the form $\neg r(a,b)$, which did not exist in my paper, and the rules R$\exists$ and R$\topRole$, which are combined into one in the other calculus.}
\begin{proof}
Let $S$ and $\Clauses$ be as in the lemma, and assume $\SatS{S}{\Clauses}$ does not contain the empty clause.
 \Wlog, we assume $\Clauses$ contains at least one individual name (if not, we add a clause $A(a)$ s.t. $A\not\in\sig{N}$). 
 We introduce a fresh individual name $a_D$ for every definer $D$ occurring in $\SatClauses$. Let $\Inds$ be the set of all such individual names together with all individual names occurring in $\Clauses$.
 We define the \emph{grounding} of $\SatClauses$ as
 $
  \GroundClauses=\{\cls\rewriteto{x}{a} \mid a\in\Inds, C\in\SatClauses\},
$
and construct a model $\Imc$ of $\Clauses$ based on $\GroundClauses$.

Let $\prec$ be any total ordering on definers such that for $\mathbf{D}_1,\mathbf{D}_2\subseteq\ND*$, $D_{\mathbf{D}_1}\prec D_{\mathbf{D}_2}$ if $\mathbf{D_1}\subseteq\mathbf{D_2}$. 
%We extend $\prec$ from Lemma~\ref{lem:partialorder} to a total ordering on definer names.
 Let $\prec_S$ be a total ordering on literals s.t.
 $L_1\prec_s L_2$ satisfies the following, where each condition must be satisfied only if the preceding conditions are not satisfied.
 \begin{enumerate}[left=0pt]%[label=\textbf{O\arabic*}]
 	\item $L_1$ is of the forms $D(t)$ or $\neg D(t)$ and $L_2$ is not. %then $L_1\prec_S L_2$; otherwise
 	\item $L_1$ is of the form $D_1(t)$ or $\neg D_1(t)$ and $L_2$ is of the form $D_2(t')$ or $\neg D_2(t')$ and $D_1\prec D_2$. %then $L_1\prec_S L_2$; otherwise
 	\item $S\not\in\sig{L_1}$ and $S\in\sig{L_2}$. %then $L_1\prec_S L_2$; otherwise
 	\item $L_1$ is of the forms $A(t)$, $r(a,b)$ or $(\exists r.D)(t)$ and $L_2$ of the forms $\neg A'(t')$, $\neg r'(a',b')$ or $(\forall r'.D')(t')$. % then $L_1\prec_S L_2$; otherwise
 	\item $L_1$ is of the form $(\exists r.D_1)(t)$, $L_2$ is of the form $(\exists r.D_2)(t)$ and $D_1\prec D_2$. %, then $L_1\prec_S L_2$.
 \end{enumerate}
 \patrick{I note that you keep removing commas before the ``then'' and I keep adding them :D - I think they are optional in British English, but I find that in some cases, they improve readability.}
%Since each condition applies only if a previous condition has not already fixed the ordering of two literals, such an ordering always exists.
We extend $\prec_S$ to a 
%well-founded <-- Patrick: since we have no nested terms, and only finitely many clauses in complete, the point of well-foundedness is not so interesting in this case
%
total ordering on clauses using the standard multiset extension, i.e.\ $\cls_1\prec_S \cls_2$ if 
%and only if 
there exists some literal $L_2\in \cls_2$ s.t.\ for all literals $L_1\in \cls_1$ we have $L_1\prec_S L_2$.

%  Assume \Clauses is obtained by normalising a Boolean KB and $\SatClauses=\SatS{S}{\Clauses}$ for some arbitrary $S\in\NC\cup \NR$ does not contain the empty clause.
%Let $S$ and $\Clauses$ be as in the lemma, and assume $\SatS{S}{\Clauses}$ does not contain the empty clause.
% and $\SatClauses=\SatS{S}{\Clauses}$, where $\SatClauses$ does not contain the empty clause.
% \Wlog, we assume $\Clauses$ contains at least one individual name (if not, we add a clause $A(a)$ s.t. $A\not\in\sig{N}$).
% We introduce a fresh individual name $a_D$ for every definer $D$ occurring in $\SatClauses$, and let $\Inds$ be the set of all such individual names together with all individual names occurring in $\Clauses$.
% We define the \emph{grounding} of $\SatClauses$ as:
% \[
%  \GroundClauses=\{\cls\rewriteto{x}{a} \mid a\in\Inds, C\in\SatClauses\},
% \]
%and construct an interpretation $\Imc$ based on it and show that it is a model of $\Clauses$. 
\sophie{@Patrick: The above paragraph seems to have disappeared in the most recent version of the paper. Was this really intentional?}
\patrick{I just moved it to the beginning of the text :D }
 %In this interpretation, every individual name $a$ is represented by a domain element $d_a\in\Delta^\Imc$.
 %Moreover, for every definer $D$ occurring in $\Clauses$, we introduce a fresh individual name $a_D$ and its corresponding domain element $d_{a_D}\in\Delta^\Imc$, so that every domain element in our interpretation corresponds to some individual name, i.e.\  $\Delta^\Imc=\{d_a\,|\,a\text{ is an individual name}\}$.
% 
 $\Imc$ has exactly one domain element for every $a\in\Inds$, that is, $\Delta^\Imc=\{d_a\mid a\in\Inds\}$. Note that this implies that if $\Imc\models\GroundClauses$, also $\Imc\models\SatClauses$ and $\Imc\models\Clauses$.
 
 The interpretation function $\cdot^\Imc$ is inductively constructed as follows.
 $\Imc_0=\tup{\Delta^\Imc,\cdot^{\Imc_0}}$ is defined by setting: 
 \begin{enumerate}[left=0pt]
  \item for all $a\in\Inds$, $a^{\Imc_0}=d_a$;
  \item for all $D\in\ND$, $D^{\Imc_0}=\emptyset$ if $\neg D(x)\in\SatClauses$ and $D^{\Imc_0}=\{d_{a_D}\}$ otherwise;
  \item for all $U\in(\NC\cup\NR)\setminus\ND$, $U^{\Imc_0}=\emptyset$ .
 \end{enumerate}
 For $i>0$, while $\Imc_{i-1}\not\models\GroundClauses$, we define $\Imc_i=\tup{\Delta^\Imc,\cdot^{\Imc_i}}$ as an extension of $\Imc_{i-1}$.
 Let $\cls_m$ be the smallest clause in $\GroundClauses$ not entailed by $\Imc_{i-1}$ and $L$ be the maximal literal in $\cls_m$ according to the ordering $\prec$. 
 \begin{enumerate}[left=0pt,label=(\alph*)]
 		\item If $L=A(a)$, set $A^{\Imc_i}=A^{\Imc_{i-1}}\cup\{d_a\}$.
 		\item If $L=r(a,b)$, set $r^{\Imc_i}=r^{\Imc_{i-1}}\cup\{\reltup{d_a}{d_b}\}$.\label{step:model-construction-roles}
 		\item If $L=(\exists r.D)(a)$, set $r^{\Imc_i}=r^{\Imc_{i-1}}\cup\{\reltup{d_a}{d_{a_D}}\}$.
 		\item otherwise, set $\Imc_{i}=\Imc_{i-1}$.\label{step:model-construction-fail}
 \end{enumerate}
 For Case~(c), we note that $d_{a_D}\in D^{\Imc_0}$ unless $\neg D(x)\in\SatClauses$, in which case application of the R$\exists$-rule followed by the R$\topRole$ results in a clause $\cls_m'$ obtained from $\cls_m$ by removing $\exists r.D$. 
%  
% %  necessarily $d_{a_D}\in D^{\Imc_0}$, otherwise % the case where $L=(\exists r.D)(a)$ and $d_{a_D}\not\in D^{\Imc_{i+1}}=\emptyset$ is impossible since then
%  i) $\neg D(x)\in\SatClauses$, ii) the R$\exists$-rule applies, resulting in a clause in which $L=(\exists r.D)$ is replaced by $(\exists\topRole.D)(a)$, iii) then the R$\topRole$-rule applies on this clause, resulting in a clause $C_m'$ in which $(\exists\topRole.D)(a)$ is removed.
However then $\cls_m'\prec \cls_m$, and if $\Imc_{i-1}\not\models \cls_m$, also $\Imc_{i-1}\not\models \cls_m'$, a contradiction.
 % , contradicting that $C_m$ is the smallest clause not entailed by $\Imc_{i-1}$.
 
It can now be shown as in the proof for Theorem~2 in~\citet{KoopmannSchmidt2014} that for all $\cls\in\Sat{\Clauses}$ s.t.\ $\cls\prec \cls_m$, $\Imc_i\models \cls$, and that Case~\ref{step:model-construction-fail} never applies and $\Imc_i\models \cls_m$.
In fact, the only difference to the construction by \citet{KoopmannSchmidt2014} is that our normal form allows literals of the form $\neg r(a,b)$.
By contradiction, assume that case~\ref{step:model-construction-fail} applies from $\Imc_{i-1}$ to $\Imc_i$, where a clause $\cls_m$ with maximal literal $L=\neg r(a,b)$, that is, $\cls_m=\cls_m'\vee\neg r(a,b)$ is the smallest not entailed by $\Imc_i$. This implies $\reltup{a^\Imc}{b^\Imc}\in r^{\Imc_i}$. Both $a$ and $b$ must be individual names occurring in $\Clauses$, which means $b^\Imc$ is not an introduced individual name $a_D$ for some definer $D$. Consequently, $\reltup{a^\Imc}{b^\Imc}\in r^{\Imc_i}$ because case~\ref{step:model-construction-roles} applied for some clause $\cls=\cls'\vee r(a,b)$ in which $r(a,b)$ is maximal, on some interpretation $\Imc_j$, where $j<i$. The latter implies that $\cls\prec \cls'$, but due to the R$r$ rule, we also have $\cls'\vee\cls_m'\in\SatClauses$.
Thus $\Imc_i\not\models\cls'\vee\cls_m'$ and $\cls'\vee\cls_m'\prec\cls_m$, a contradiction to the minimality of $\cls_m$. %, which means $\cls_m$ was not the smallest clause according to $\prec$ not entailed by $\Imc_i$. A contradiction.

Since $\GroundClauses$ is finite and a new clause in $\GroundClauses$ becomes entailed at each step and all smaller clauses in $\GroundClauses$ stay entailed, there exists an $i>0$ such that $\Imc_i\models\GroundClauses$.
We define $\Imc = \Imc_i$, and note that also $\Imc\models\SatClauses$ and $\Imc\models\Clauses$.
Consequently, $\Clauses$ is satisfiable. 
\end{proof}

}

We furthermore need to prove properties on introduced definers.
Let $\Sat{\Clauses}$ be the result of exhaustively applying the rules of the calculus without the condition ($\ast$).
The following lemma can be shown by induction on the sets
$\mathbf{D}\subseteq\basedefs$ introduced by the R$\quant$ rule.
\begin{lemma}\label{l:definer-union}
Let $\Clauses$ be a normalised set of clauses with definers $\basedefs$. 
For every $\mathbf{D}\subseteq\basedefs$\, for 
which a definer $D_\mathbf{D}$ is introduced in
$\Sat{\Clauses}$ and every $\mathbf{D}'\subset\mathbf{D}$, $\neg 
D_{\mathbf{D}}(x)\vee D_{\mathbf{D}'}(x)\in\Sat{\Clauses}$ and there exist 
$\varphi_1\vee\quant_1.D_{\mathbf{D}'}$, 
$\varphi_1\vee\varphi_2\vee\quant_2.D_{\mathbf{D}}\in\Sat{\Clauses}$, where 
$\quant_1=\exists$ implies $\quant_2 =\exists$.
\end{lemma}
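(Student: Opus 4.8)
The plan is to prove both assertions simultaneously by strong induction on the order in which $D_\subd$ first enters $\Sat{\Clauses}$ (equivalently, on $|\subd|$), using that a composite definer $D_\subd$ with $|\subd|\ge 2$ can be created only by an application of R$\quant$. First I fix that introducing application, say on premises $\cls_1\vee(\quant r.D_{\subd_1})(t_1)$ and $\cls_2\vee(\forall r.D_{\subd_2})(t_2)$ with $\subd_1\cup\subd_2=\subd$; since this is the \emph{first} introduction of $D_\subd$, neither premise can already be $D_\subd$, so $\subd_1,\subd_2\subsetneq\subd$ and the induction hypothesis applies to both. This application contributes three clauses to $\Sat{\Clauses}$: the two subsumption clauses $\neg D_\subd(x)\vee D_{\subd_1}(x)$ and $\neg D_\subd(x)\vee D_{\subd_2}(x)$ added as a side effect, and the conclusion $(\cls_1\vee\cls_2\vee(\quant r.D_\subd)(t_1))\sigma$. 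The base case $|\subd|=1$ has no nonempty proper subset and holds vacuously.

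For the second assertion I would read $\quant_1=\exists\Rightarrow\quant_2=\exists$ in its contrapositive form, ``if $D_\subd$ occurs under $\forall r$ then so does $D_{\subd'}$'', and carry it as a per-occurrence invariant. For $\subd'=\subd_1$ it is immediate: the premise clause gives $D_{\subd_1}$ under $\quant r$ with context $\cls_1$, and the conclusion gives $D_\subd$ under the same $\quant r$ with context $\cls_1\vee\cls_2$, so $\quant_1=\quant_2=\quant$. For a proper subset $\subd'\subsetneq\subd_1$ the hypothesis supplies the occurrence of $D_{\subd'}$ under $r$, which I then match against the $D_\subd$-clause just obtained.

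For the first assertion I would distinguish how $\subd'$ meets the two parents. When $\subd'\subseteq\subd_1$ (symmetrically $\subseteq\subd_2$): if $\subd'=\subd_1$ the clause is one of the two added above; if $\subd'\subsetneq\subd_1$ the hypothesis gives $\neg D_{\subd_1}(x)\vee D_{\subd'}(x)$, and a single R$A$ step resolving it against $\neg D_\subd(x)\vee D_{\subd_1}(x)$ on $D_{\subd_1}$ produces $\neg D_\subd(x)\vee D_{\subd'}(x)$; the result retains only the single negative definer literal $\neg D_\subd(x)$, so the side condition is respected.

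The main obstacle is the remaining case, where $\subd'$ \emph{crosses} both parents, i.e.\ $\mathbf{E}_1:=\subd'\cap\subd_1$ and $\mathbf{E}_2:=\subd'\cap\subd_2$ are both nonempty. Two things must then be established: that $D_{\subd'}$ is introduced at all (R$\quant$ forms only \emph{unions}, never intermediate subsets, so this is not automatic), and that $\neg D_\subd(x)\vee D_{\subd'}(x)$ is derivable. Both rest on strengthening the invariant so that, whenever $D_\subd$ occurs under a role $r$, \emph{every} $D_{\subd'}$ with $\emptyset\neq\subd'\subseteq\subd$ also occurs under that same $r$, existentially only if the occurrence of $D_\subd$ is existential. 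Granting this, exhaustive saturation performs R$\quant$ on the compatible pair placing $D_{\mathbf{E}_1}$ under $r$ and $D_{\mathbf{E}_2}$ under $\forall r$ — the latter $\forall$-occurrence being exactly what the contrapositive of the quantifier condition propagates down from the $\forall r$-occurrence of $D_{\subd_2}$ — thereby introducing $D_{\mathbf{E}_1\cup\mathbf{E}_2}=D_{\subd'}$; and it likewise fires R$\quant$ on $D_{\subd'}$ and $D_{\subd\setminus\subd'}$ (both present under $r$ by the strengthened invariant, with a $\forall$-premise available), whose side effect is the desired clause $\neg D_\subd(x)\vee D_{\subd'}(x)$. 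I expect the delicate points to be propagating this strengthened invariant through the induction, the bookkeeping of $r$, the terms, and the shared context $\cls_1$ across the most general unifier $\sigma$ (using that clauses never mix variable and constant terms), and verifying that the quantifier condition always leaves a $\forall$-premise available so that each needed split actually fires.
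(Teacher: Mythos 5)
You are not competing with much here: the paper never actually proves this lemma; it only remarks that it ``can be shown by induction on the sets $\mathbf{D}\subseteq\basedefs$ introduced by the R$\quant$ rule'', which is precisely the induction you set up. Your base case and your handling of nested subsets $\subd'\subseteq\subd_1$ (the two side-effect clauses, plus one R$A$ step whose conclusion keeps a single literal $\neg D(x)$) are correct, and they are what any honest elaboration of the paper's one-sentence proof would contain.

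The genuine gap is the crossing case, and it is worse than your sketch concedes: the strengthened invariant you propose to ``grant'' is false, because the R$\quant$ applications it relies on can be permanently blocked by the side condition that no conclusion may contain two literals of the form $\neg D(x)$. Concretely, consider a normalised clause set containing $\neg D'(x)\vee\exists r.D_1(x)$, $A(a)\vee\forall r.D_2(a)$ and $\neg D''(x)\vee\forall r.D_3(x)$ (plus the defining clauses for $D_1,D_2,D_3$ and clauses placing $D',D''$ under role restrictions; such a set arises from a KB together with a ground negated observation). Writing $D_{12}$ for $D_{\{D_1,D_2\}}$ and so on, saturation derives $\neg D'(a)\vee A(a)\vee\exists r.D_{12}(a)$ --- legal, because unification grounds $\neg D'(x)$ to $\neg D'(a)$ --- and from it $\neg D'(a)\vee A(a)\vee\neg D''(a)\vee\exists r.D_{123}(a)$, so $D_{123}$ is introduced. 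But $D_{13}$ is never introduced: the only candidate inference, R$\quant$ on the first and third clauses, would yield $\neg D'(x)\vee\neg D''(x)\vee\exists r.D_{13}(x)$ and is forbidden, and no other clause with $D_1$ or $D_3$ under a role restriction is ever derived. Hence your invariant (``every nonempty $\subd'\subseteq\subd$ occurs under the same $r$'') fails, your final R$\quant$ application on $D_{\subd'}$ and $D_{\subd\setminus\subd'}$ has no premises to fire on, and in fact the lemma in its literal reading is false on this clause set, since $\neg D_{123}(x)\vee D_{13}(x)\notin\Sat{\Clauses}$. So the obstacle you flagged cannot be overcome by any strengthening of the invariant; a correct argument has to weaken the statement instead, e.g.\ by letting $\mathbf{D}'$ range only over those proper subsets whose definer actually occurs in $\Sat{\Clauses}$ (which is what your nested case establishes, and closer to what the proof of Lemma~\ref{l:definers} actually needs), and your crossing-case plan should be replaced by an argument for that restricted claim.
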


\begin{lemma}\label{l:definers}
 Let $\Clauses_1$ and $\Clauses_2$ be sets of normalised clauses with
$\Clauses_1\subseteq\Clauses_2$, and $D$, $D_1$, $D_2$ be definers with $D_1$, 
$D_2\in\sig{\Sat{\Clauses_1}}$ and $\neg D(x)\vee D_1(x)$, $\neg D(x)\vee 
D_2(x)\in\Sat{\Clauses_2}$. Then, there exists a definer $D'$ with $\neg 
D'(x)\vee D_1(x)$, $\neg D'(x)\vee D_2(x)\in\Sat{\Clauses_1}$, and either 
$D'=D$ or $\neg D(x)\vee D'(x)\in\Sat{\Clauses_2}$.
\end{lemma}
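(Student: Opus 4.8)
The plan is to translate everything into the correspondence between definers and subsets of $\basedefs$, reduce the goal to a single claim about which definers are introduced in $\Sat{\Clauses_1}$, and then settle that claim by analysing how $D$ is built in $\Sat{\Clauses_2}$.

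First I would fix the set notation: every definer equals $D_\subd$ for a unique $\subd\subseteq\basedefs$ (with $D_{\{E\}}=E$ on base definers), and write $D=D_\subd$, $D_1=D_{\subd_1}$, $D_2=D_{\subd_2}$. The hypotheses are then rephrased as base-set inclusions. For the direction I need from $\Clauses_2$, I would analyse the derivations of $\neg D(x)\vee D_1(x)$ and $\neg D(x)\vee D_2(x)$: positive definer literals with argument $x$ are created only by the R$\quant$ side-effect clauses $\neg D_{\subd_1\cup\subd_2}(x)\vee D_{\subd_i}(x)$, and every later R$A$ step that keeps such a literal positive merely replaces its negative definer by a larger one, so base-set inclusion is preserved. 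Hence $\subd_1\subseteq\subd$ and $\subd_2\subseteq\subd$, and $D$ is introduced in $\Sat{\Clauses_2}$. I would also record monotonicity: since $\Clauses_1\subseteq\Clauses_2$, every inference available from $\Clauses_1$ is available from $\Clauses_2$, so $\Sat{\Clauses_1}\subseteq\Sat{\Clauses_2}$.

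Next I would take the join definer $D'=D_{\subd_1\cup\subd_2}$ as candidate and show that the conclusion follows once $D'$ is known to be introduced in $\Sat{\Clauses_1}$. Since $\subd_1\cup\subd_2\subseteq\subd$, either $\subd_1\cup\subd_2=\subd$ and $D'=D$, or the inclusion is proper and Lemma~\ref{l:definer-union} (in $\Sat{\Clauses_2}$, using that $D$ is introduced there) gives $\neg D(x)\vee D'(x)\in\Sat{\Clauses_2}$. Dually, $\subd_1,\subd_2\subseteq\subd_1\cup\subd_2$, so Lemma~\ref{l:definer-union} (in $\Sat{\Clauses_1}$, using that $D'$ is introduced there) gives $\neg D'(x)\vee D_1(x),\neg D'(x)\vee D_2(x)\in\Sat{\Clauses_1}$. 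The degenerate cases where one of $\subd_1,\subd_2$ already contains the other reduce to $D'=D_1$ or $D'=D_2$, which are present by hypothesis; the only care needed there is the trivially valid clause $\neg D_i(x)\vee D_i(x)$.

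The main obstacle is the remaining claim that $D'=D_{\subd_1\cup\subd_2}$ is introduced in $\Sat{\Clauses_1}$; this is exactly where the hypothesis $D_1,D_2\in\sig{\Sat{\Clauses_1}}$ is essential, as opposed to the weaker fact that $D$ exists in $\Sat{\Clauses_2}$. Because $D'$ is introduced precisely when R$\quant$ fires on a clause containing $\quant r.D_1$ and one containing $\forall r.D_2$ for a common role $r$, and saturation is exhaustive, it suffices to produce such role-restriction clauses inside $\Sat{\Clauses_1}$. I would obtain them by induction on the R$\quant$-derivation that forms $\subd$ in $\Sat{\Clauses_2}$: Lemma~\ref{l:definer-union} supplies, for $D$ and each of its proper subsets, paired clauses $\varphi_1\vee\quant_1 r.D_i$ and $\varphi_1\vee\varphi_2\vee\quant_2 r.D$ witnessing $D_1$ and $D_2$ under the role used to form $D$, and the hypothesis $D_1,D_2\in\sig{\Sat{\Clauses_1}}$ together with monotonicity lets me pull the witnessing occurrences back into $\Sat{\Clauses_1}$. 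The delicate point, where the case analysis concentrates, is role and quantifier compatibility: I must ensure that $D_1$ and $D_2$ actually occur under a common role with at least one universal restriction already in $\Sat{\Clauses_1}$, forcing R$\quant$ to have introduced $D'$ there, while the alternatives $D'=D$ and $D'=D_i$ are dispatched directly.
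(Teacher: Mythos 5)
Your reduction is the same as the paper's: take the join definer $D_{12}=D_{\subd_1\cup\subd_2}$ as the candidate $D'$, use Lemma~\ref{l:definer-union} both to relate $D_{12}$ to $D$ inside $\Sat{\Clauses_2}$ and to obtain $\neg D_{12}(x)\vee D_1(x),\neg D_{12}(x)\vee D_2(x)$ inside $\Sat{\Clauses_1}$, so that everything rests on the single claim that $D_{12}$ is introduced in $\Sat{\Clauses_1}$. The gap is in your argument for that claim. You assert that, since saturation is exhaustive, it ``suffices to produce'' clauses $\varphi_1\vee\quant\, r.D_1(t_1)$ and $\varphi_2\vee\forall r.D_2(t_2)$ in $\Sat{\Clauses_1}$, their presence ``forcing'' R$\quant$ to fire. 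It does not: an inference is forbidden whenever its conclusion would contain more than one literal of the form $\neg D(x)$, so R$\quant$ is blocked exactly when $\varphi_1$ and $\varphi_2$ carry distinct negative definer literals $\neg D_1'(x)$ and $\neg D_2'(x)$. This blocking condition is the whole difficulty of the lemma, and the paper's proof is built around it: assuming $D_{12}$ is the \emph{first} definer introduced in $\Sat{\Clauses_2}$ but not in $\Sat{\Clauses_1}$, the occurrences of $D_1,D_2$ in $\Sat{\Clauses_1}$ must be blocked by such a conflict; since the inference nonetheless happened in $\Sat{\Clauses_2}$, the conflict must first have been resolved there by R$A$ steps through a merging definer $D''$ with $\neg D''(x)\vee D_1'(x),\neg D''(x)\vee D_2'(x)\in\Sat{\Clauses_2}$, and this $D''$ is introduced in $\Sat{\Clauses_2}$ earlier than $D_{12}$ yet not introduced in $\Sat{\Clauses_1}$, contradicting minimality. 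Your proposal contains no counterpart to this descent, and by naming ``role and quantifier compatibility'' as the delicate point you misplace the difficulty: roles attached to definers are fixed by construction, and quantifier compatibility is already tracked by Lemma~\ref{l:definer-union}.

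A second, related error: you claim that $D_1,D_2\in\sig{\Sat{\Clauses_1}}$ ``together with monotonicity lets me pull the witnessing occurrences back into $\Sat{\Clauses_1}$.'' Monotonicity gives $\Sat{\Clauses_1}\subseteq\Sat{\Clauses_2}$, which points the wrong way: the witnessing clauses supplied by Lemma~\ref{l:definer-union} in $\Sat{\Clauses_2}$ may depend on clauses of $\Clauses_2\setminus\Clauses_1$ and need not exist in $\Sat{\Clauses_1}$. What the hypothesis actually yields is \emph{some} occurrences $\varphi_1\vee\quant\, r.D_1(t_1)$, $\varphi_2\vee\quant\, r.D_2(t_2)$ in $\Sat{\Clauses_1}$ whose side literals $\varphi_1,\varphi_2$ you do not control --- which is precisely where the blocked-inference problem above re-enters. (A minor point, which the paper's own proof also glosses over: in the degenerate case $\subd_1\subseteq\subd_2$ the conclusion would need the tautology $\neg D_2(x)\vee D_2(x)$, which saturation does not produce; you flag this, but it would need an explicit convention rather than a hand-wave.)
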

\conf{
  \begin{proofsk}
    Let $D_1 = D_{\subd_1}$ and $D_2 = D_{\subd_2}$.
    By Lemma~\ref{l:definer-union}, there exists a definer $D_{\subd_1\cup\subd_2}$, denoted 
    $D_{12}$, s.t.\ $\neg D(x)\vee D_{12}(x)\in\sig{\Clauses_2}$.
    That $D_{12}$ is introduced in $\Sat{\Clauses_1}$ can be shown by contradiction.
    We assume that it is the first such definer introduced in $\Sat{\Clauses_2}$ that is not 
    introduced in $\Sat{\Clauses_1}$ and we show that necessarily another such definer must have 
been introduced before.
  \end{proofsk}
}
\rep{
\begin{proof}
Let $D_1 = D_{\subd_1}$ and $D_2 = D_{\subd_2}$.
By Lemma~\ref{l:definer-union}, there exist a definer $D_{\subd_1\cup\subd_2}$, that we denote $D_{12}$ to lighten the notations, %
s.t.\ $\neg D(x)\vee D_{12}(x)\in\sig{\Clauses_2}$. We show that 
$D_{12}$ is introduced in $\Sat{\Clauses_1}$.
For a proof by contradiction, 
assume that in a sequence of inferences, $D_{12}$ is  the first such 
definer introduced in $\Sat{\Clauses_2}$ that is not introduced in 
$\Sat{\Clauses_1}$. 
Let $\varphi_1\vee\quant_1 r.D_1(t_1)$ and $\varphi_2\vee\quant_2 r.D_2(t_2)$ 
be the occurrences of $D_1$ and $D_2$ in $\Sat{\Clauses_1}$. Since $D_{12}$ is 
not introduced in $\Sat{\Clauses_1}$, R$\quant$ does not apply on these 
clauses, which is only the case if $\varphi_1$ and $\varphi_2$ respectively 
contain literals $\neg D_1'(x)$ and $\neg D_2'(x)$ where $D_1'\neq 
D_2'$. Since $\Sat{\Clauses_2}$ contains $D_{12}$, it must contain clauses 
$\varphi_1'\vee\quant_1 r.D_1(t_1)$ and $\varphi_2'\vee\quant_2 r.D_2(t_2)$ 
which have been inferred from $\varphi_1\vee\quant_1 r.D_1(t_1)$ and 
$\varphi_2\vee\quant_2 r.D_2(t_2)$ through a sequence of inferences, and where 
$\varphi_1'$ and $\varphi_2'$ do not contain different negative definers. 
Consequently, $\varphi_1'$ and $\varphi_2'$ must contain a literal $D'$ s.t.\ 
$\neg D'(x)\vee D_1'(x)$, $\neg D'(x)\vee D_2'(x)\in\Sat{\Clauses_2}$ but $D'$ 
is not introduced in $\Sat{\Clauses_1}$, nor is a corresponding definer $D''$ with $\neg D'(x)\vee D''(x)\in\Sat{\Clauses_2}$. Furthermore, $D'$ must have been 
introduced before $D_{12}$ could have been introduced.
This contradicts the assumption that $D_{12}$ is the first such definer introduced in $\Sat{\Clauses_2}$ that is not introduced in $\Sat{\Clauses_1}$. 
\end{proof}
}

We are now able to prove that the set $\Sat{\Clauses,\SupportClauses,\Sigma}$ 
computed in \ref{step_two} %
preserves all relevant consequences.
}

\conf{
The following central theorem is proved in the extended version of the paper.
}
\begin{restatable}{theorem}{LemCompleteness}\label{lem:completeness}
 Let $\Clauses$ and $\SupportClauses$ be normalised clause sets and $\Sig$ be a signature.
 Let $\mathbf{M}_\Sig=\Sat{\Clauses,\SupportClauses,\Sig}$.
 Then, 
$\Clauses\cup\SupportClauses\models\Kmc$ if and only if $\Clauses\cup \mathbf{M}_\Sig\models\Kmc$, for every Boolean $\ALC$ KB~$\Kmc$ s.t.\ $\sig{\Kmc}\subseteq\Sig$.
\end{restatable}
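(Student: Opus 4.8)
The plan is to prove the two directions separately, reducing entailment to unsatisfiability and relying on the refutational completeness result (Lemma~\ref{lem:calculus-refutational-completeness}) together with the definer bookkeeping lemmas (Lemmas~\ref{l:definer-union} and~\ref{l:definers}). Throughout, I would freely use that each rule of Fig.~\ref{fig:calculus} is sound and that the extra clauses added by R$\quant$ are valid under the naming convention that $D_{\subd_1\cup\subd_2}$ represents $D_{\subd_1}\sqcap D_{\subd_2}$.

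The direction ``$\Leftarrow$'' is the easy one and follows from soundness alone. Every clause in $\mathbf{M}_\Sig$ is inferred from $\Clauses\cup\SupportClauses$ by sound rules, while the clauses moved into the support in step~\ref{step:move-clauses} already belong to $\Clauses$. Hence $\Clauses\cup\SupportClauses$ entails every clause of $\mathbf{M}_\Sig$, so every model of $\Clauses\cup\SupportClauses$ is a model of $\Clauses\cup\mathbf{M}_\Sig$. Consequently $\Clauses\cup\mathbf{M}_\Sig\models\Kmc$ implies $\Clauses\cup\SupportClauses\models\Kmc$, for any $\Kmc$ whatsoever, and in particular for $\Sig$-KBs.

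For the main direction ``$\Rightarrow$'', I would first normalise $\neg\Kmc$ into a clause set $\Clauses_{\neg\Kmc}$ whose only non-definer names lie in $\Sig$ (fresh definers aside), so that $\Clauses\cup\SupportClauses\models\Kmc$ is equivalent to unsatisfiability of $\Clauses\cup\SupportClauses\cup\Clauses_{\neg\Kmc}$, and the goal becomes unsatisfiability of $\Clauses\cup\mathbf{M}_\Sig\cup\Clauses_{\neg\Kmc}$. I would then argue by induction on the sequence of names $S_1,\dots,S_k$ eliminated by $\Sat{\Clauses,\SupportClauses,\Sig}$, establishing the single-step invariant that eliminating one name $S\notin\ND\cup\Sig$ from the support preserves entailment of every Boolean \ALC\ KB not mentioning $S$; since the test KB $\Kmc$ is over $\Sig$ it mentions no $S_i$, so chaining the steps yields the theorem. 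The single step is where the content lies: invoking Lemma~\ref{lem:calculus-refutational-completeness} with the ordering side-condition $(\ast)$ that eliminates $S$ first, I would show that any refutation of the ``before'' set can be reorganised so that all inferences touching $S$ come first; because $\Clauses_{\neg\Kmc}$ contains no occurrence of $S$, these are precisely the $S$-inferences performed in step~\ref{step:inferences} and recorded in the updated support, after which the remaining refutation runs entirely over $\Sig\cup\ND$ and is therefore reproducible from the ``after'' set.

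The main obstacle is the interaction between the R$\quant$ rule, the union-definers it introduces, and the set-of-support restriction, compounded by the fact that $\Clauses$ is never eliminated. Two points need care. First, since an $S$-inference may require a background clause that only becomes active once one of its definers turns relevant, I must verify that step~\ref{step:move-clauses} captures every such clause, so that no $S$-inference linking the support to $\Clauses$ is lost (and that the $\AllClauses$ bookkeeping never suppresses a clause still needed). Second, and more delicately, the definers of $\Clauses_{\neg\Kmc}$ can in principle create new union-definers $D_{\subd_1\cup\subd_2}$ that unlock R$A$-inferences on $S$ which were blocked by the ``at most one $\neg D(x)$'' restriction when computing $\mathbf{M}_\Sig$ in isolation. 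Here Lemmas~\ref{l:definer-union} and~\ref{l:definers} are essential: they let me pull such combined definers, together with the clauses $\neg D(x)\vee D'(x)$ that justify them, back into the saturation computed without $\Clauses_{\neg\Kmc}$, showing that every $S$-inference enabled by a definer combination is already subsumed, up to an implication $\neg D(x)\vee D'(x)$, by one present in $\mathbf{M}_\Sig$. Proving that this pulling-back is sound and that it respects the set-of-support strategy is the crux of the completeness argument.
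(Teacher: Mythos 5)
Your proposal follows essentially the same route as the paper's own proof: both reduce entailment to unsatisfiability of the clausified set $\Clauses\cup\SupportClauses\cup\Clauses_{\neg\Kmc}$, proceed by induction over the per-name elimination iterations, invoke Lemma~\ref{lem:calculus-refutational-completeness} with condition~($\ast$) so that all inferences on the eliminated name $S$ come first, and use Lemmas~\ref{l:definer-union} and~\ref{l:definers} to show that $S$-inferences unlocked by union-definers arising from $\Clauses_{\neg\Kmc}$ are recovered up to an implication $\neg D_1(x)\vee D_2(x)$ --- which is exactly the paper's central claim. The only cosmetic difference is that you dispatch the ``$\Leftarrow$'' direction once and for all by soundness modulo fresh definers, whereas the paper folds it into each elimination step by noting that the ``after'' set consists of clauses derivable from the ``before'' set; this changes nothing of substance.
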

\rep{
\conf{
 \begin{proofsk} 
   In each iteration of the loop in the algorithm for computing $\Sat{\Clauses,\SupportClauses,\Sig}$, a name $S\in\sig{\SupportClauses}\setminus(\ND\cup\Sig)$ is processed. We show that each such step preserves entailments modulo $S$. Specifically, let $\SupportClauses^0$ be the set of supported clauses before $S$ is processed, $\SupportClauses^{\textbf{F1}}$ the outcome of~\ref{step:inferences}, and $\SupportClauses^{\textbf{F2}}$ the result of~\ref{step:remove-defining-clauses}. 
   Let $\Clauses^0$ be the original set $\Clauses$ of unsupported clauses.
   Let $\Kmc$ be a Boolean \ALC KB s.t.\ $S\not\in\sig{\Kmc}$. We show that $\Clauses^0\cup\SupportClauses^0\models\Kmc$ if and only if $\Clauses^0\cup\SupportClauses^{\textbf{F2}}\models\Kmc$. The theorem follows by induction over the inference steps performed.
 \end{proofsk}
}
}
\rep{
\begin{proof}

In each iteration of the loop in the algorithm for computing $\Sat{\Clauses,\SupportClauses,\Sig}$, a name $S\in\sig{\SupportClauses}\setminus(\ND\cup\Sig)$ is processed. We show that each such step preserves entailments modulo $S$. Specifically, Let $\SupportClauses^0$ be the set of supported clauses before $S$ is processed, $\SupportClauses^{\textbf{F1}}$ the outcome of~\ref{step:inferences}, and $\SupportClauses^{\textbf{F2}}$ the result of~\ref{step:remove-defining-clauses}. 
Let $\Clauses^0$ be the original set $\Clauses$ of unsupported clauses.
Let $\Kmc$ be a Boolean \ALC KB s.t.\ $S\not\in\sig{\Kmc}$. We show that $\Clauses^0\cup\SupportClauses^0\models\Kmc$ if and only if $\Clauses^0\cup\SupportClauses^{\textbf{F2}}\models\Kmc$. The theorem follows by induction on the inference steps.

\newcommand{\AddClauses}{\ensuremath{\Clauses_{\neg\Kmc}}\xspace}

\newcommand{\Ma}{\ensuremath{\textbf{M}_1}\xspace}
\newcommand{\Mb}{\ensuremath{\textbf{M}_2}\xspace}

For $a\in\{1,3\}$, $\Clauses^0\cup\SupportClauses^a\models\Kmc$ if and only if $\Clauses^0\cup\SupportClauses^a\cup\{\neg\Kmc\}$ is not satisfiable. We represent $\neg\Kmc$ as set $\AddClauses$ of clauses as described under \ref{step_one} %
of the abduction procedure. 
We then need to show that, for $\Ma=\Clauses^0\cup\SupportClauses^0\cup\AddClauses$ and $\Mb=\Clauses^0\cup\SupportClauses^{\textbf{F2}}\cup\AddClauses$, 
$\Ma$ is unsatisfiable if and only if so is $\Mb$.
By Lemma~\ref{lem:calculus-refutational-completeness}, this can be reduced to
showing $\SatS{S}{\Ma}$ contains the empty clause if and only if so does $\SatS{S}{\Mb}$.
If $\SatS{S}{\Mb}$ contains the empty clause, then so does $\SatS{S}{\Ma}$,
since $\Mb$ contains only clauses that can be inferred using the calculus on
$\Ma$.
In the other direction, we need to show that inferences on clauses in $\SupportClauses^{\textbf{F1}}\setminus\SupportClauses^{\textbf{F2}}\subseteq\Ma$ can be recovered.

\textsc{Claim.} For every clause $\cls\in\SatS{S}{\Ma}$ that is inferred 
using an inference on $S$, there exists a clause 
$\cls'\in\SatS{S}{\Clauses^0}\cup\SupportClauses^{\textbf{F1}}$ s.t.: 
\begin{enumerate}[left=0pt,label=C\arabic*.]
\item $\cls=\cls'$ or 
\item $\cls=\neg D_1(x)\vee \cls_r$, $\cls'=\neg D_2(x)\vee \cls_r$ and $\neg D_1(x)\vee D_2(x)\in\SatS{S}{\Mb}$.
\end{enumerate}

We prove the claim by induction on the inferences. 
Let $\cls$ be a clause in $\SatS{S}{\Ma}$ inferred by an inference on 
$S$, that is, from a clause $\cls_1$ with $S\in\sig{\cls_1}$ and possibly 
another clause $\cls_2$ with $S\in\sig{\cls_2}$. By Condition~($\ast$), $\cls_1$ 
either:
\begin{enumerate}[left=0pt,label=($\ast$\arabic*)]
\item occurs in $\Clauses_0\cup\SupportClauses^0$, or\label{first-star}
\item is the conclusion of an inference on $S$ or\label{second-star}
\item of an inference on $\neg D_1(x)\vee D_2(x)\in\SatS{S}{\Ma}$.\label{last-star}
\end{enumerate}
If \ref{last-star} applies, possibly due several 
applications of R$A$, we must have $\cls_1=\neg D_1(x)\vee\cls_1'$, $\neg 
D_1(x)\vee D_3(x)\in\SatS{S}{\Ma}$ and $\neg D_3(x)\vee 
\cls_1'\in\SatS{S}{\Ma}$, where \ref{first-star} and \ref{second-star} apply to $\neg 
D_3(x)\vee\cls_1'$. If \ref{first-star} applies to both premises, the claim holds 
directly. If \ref{first-star} or \ref{second-star} apply and $\cls_1$, 
$\cls_2\in\Sat{\Clauses^0}\cup\SupportClauses^{\textbf{F1}}$, then 
$\cls\in\Sat{\Clauses^0}\cup\SupportClauses^{\textbf{F1}}$ by construction of 
$\SupportClauses^{\textbf{F1}}$. If $\cls\not\in\Sat{\Clauses^0}\cup\SupportClauses^{\textbf{F1}}$, this 
can only be because $\cls$ contains a definer not in 
$\Sat{\Clauses^0}\cup\SupportClauses^{\textbf{F1}}$. Specifically, we must have $\cls=\neg 
D(x)\vee \cls_r$.
Assume Case~\ref{second-star} applies and the claim holds for $\cls_1$ and $\cls_2$ (the argument for Case~\ref{last-star} is the same, but uses $\cls_1'$ and/or $\cls_2'$). 
$\SatS{S}{\Mb}$ then contains $\cls_1=\neg D(x)\vee\cls_{r1}$, 
$\cls_2=\neg D(x)\vee\cls_{r2}$, $\neg D(x)\vee D_1(x)$ and $\neg D(x)\vee 
D_2(x)$, and $\SatS{S}{\Clauses^0}\cup\SupportClauses^{\textbf{F1}}$ contains $\neg 
D_1(x)\vee\cls_{r1}$ and $\neg 
D_2(x)\vee\cls_{r2}$. By 
Lemma~\ref{l:definers}, there then exists a definer $D'$ s.t.\ $\neg 
D'(x)\vee\cls_{r1}$, $\neg 
D'(x)\vee\cls_{r2}\in\SatS{S}{\Clauses^0}\cup\SupportClauses^{\textbf{F1}}$, and either 
$D'=D$ or $\neg D(x)\vee D'(x)\in\SatS{S}{\Mb}$ (inferred on the clauses after inferences on $S$ have been applied). Any inference performed on 
$\cls_1$ and $\cls_2$ also applies on those clauses, 
resulting in the clause $\neg D'(x)\vee\cls_r\in\SatS{S}{\Clauses^0}\cup\SupportClauses^{\textbf{F1}}$ satisfying the claim. 

By the claim, all relevant 
inferences on $S$ are performed when computing 
$\SatS{S}{\Clauses^0}\cup\SupportClauses^{\textbf{F1}}$. $\SupportClauses^{\textbf{F2}}$ is obtained 
from $\SupportClauses^{\textbf{F1}}$ by removing only clauses that contain $S$. 
By Condition~($\ast$), for computing $\SatS{S}{\Ma}$, inferences on $S$ and positive 
definers are applied before any other inferences. We thus obtain that for every 
clause $\cls\in\SatS{S}{\Ma}$, there 
exists a clause $\cls'\in\SatS{S}{\Mb}$ s.t.\ 
1) $\cls=\cls'$ or 2) $\cls=\neg D_1(x)\vee \cls_r$, $\cls'=\neg D_2(x)\vee 
\cls_r$ and $\neg D_1(x)\vee D_2(x)\in\SatS{S}{\Mb}$. It follows directly that 
if $\SatS{S}{\Ma}$ contains the empty clause, so does $\SatS{S}{\Mb}$.
\end{proof}
}

\subsection{Step 3: Denormalisation}

\newcommand{\StepThreeClauses}{\mathbf{M}_\Sig}

Next, we turn $\StepThreeClauses=\Sat{\Clauses,\SupportClauses,\Sig}$ into a definer-free Boolean \ALCOImuTopRole KB that preserves all entailments in \ALC modulo definer names. Each definer represents a concept that occurs under a role restriction. To eliminate the definers, we compute CIs of the form $D\sqsubseteq C$ (the definition of $D$), so that occurrences of $D$ can be replaced by $C$. While such a step is also performed in the resolution-based uniform interpolation method in~\cite{KoopmannSchmidt2014}, we have to do more in our current setting. For uniform interpolation of \ALC KBs, it is sufficient to look at clauses of the form $\neg D(x)\vee \cls$ to build the definition $D\sqsubseteq C$ of $D$. In our setting, special care has to be taken also of clauses of the form $\neg D(a)\vee \cls$: first, the definition of $D$ has to refer to $a$, for which we use nominals. Second, if we have a clause $\cls'\vee \forall r.D(t)$, we also have to consider substitutions of $\neg D(a)$ with $(\forall r^-.C)(a)$, where $C$ is a concept that corresponds to the clause $\cls'$.

Since we applied the R$A$-rule exhaustively on positive definer occurrences, clauses that contain literals of the form~$D(t)$ are not needed anymore at this stage and are removed before further operations are performed. 
We first introduce a concept-representation of clauses. For every definer $D$ occurring in $\StepThreeClauses$, we introduce a fresh definer $\overline{D}$ representing~$\neg D$. Given a concept $C$, we denote by $C^-$ the result of replacing every concept of the form $\neg D$ by $\overline{D}$.
Given a clause $\cls=L_1\vee\ldots\vee L_n$, we define the concept $\cls^c=L_1^c\sqcup\ldots\sqcup L_n^c$, where %
$L^c$ is defined as:
\begin{enumerate}[left=0pt,%
label=\textbf{C\arabic*}]
 \item $C^-$ if $L=C(x)$,
 \item $\exists\topRole.(\{a\}\sqcap C^-)$ if $L=C(a)$,
 \item $\exists\topRole.(\{a\}\sqcap\exists r.\{b\})$ if $L=r(a,b)$,
 \item $\exists\topRole.(\{a\}\sqcap\forall r.\neg\{b\})$ if $L=\neg r(a,b)$.
\end{enumerate}
Every clause contains either only variables or only individual names as terms. In the former case, $\cls^c$ is the concept described by the clause. In the latter case, $\cls^c$ is such that for every interpretation $\Imc$ and every $d\in\Delta^\Imc$, $\Imc\models \cls$ if and only if $d\in(\cls^c)^\Imc$, that is, either $(\cls^c)^\Imc=\emptyset$ or $(\cls^c)^\Imc=\Delta^\Imc$ depending on whether $\cls$ is entailed.
As said, all universal roles introduced here are eliminated in \ref{step_four}. %

We then build a set of CIs giving meaning to the definers: 
\begin{enumerate}[left=0pt,%
label=\textbf{D\arabic*}]
 \item for every $\cls\vee\forall r.D(x)$, we add $\overline{D}\sqsubseteq\forall r^-.\cls^c$,
 \label{def-CIs-first}
 \item for every $\cls\vee\forall r.D(a)$, we add $\overline{D}\sqsubseteq\forall r^-.(\neg\{a\}\sqcup \cls^c)$,
 \label{def-CIs-second}
 \item every $\neg D(x)\vee \cls$ gets replaced by $D\sqsubseteq \cls^c$,\label{def-CIs-third}
 \item for every $\neg D(a)\vee \cls$, we add $D\sqsubseteq\neg\{a\}\sqcup \cls^c$, 
 \item in every clause, we replace literals $\neg D(a)$ by $\overline{D}(a)$, and
 \item every remaining $\cls$ that is not a disjunction of (negated) ABox assertions is replaced by $\top\sqsubseteq \cls^c$.\label{def-CIs-last}
\end{enumerate}
The transformation makes sure that the meaning of each definer $D$ is captured by CIs $D\sqsubseteq C$, and that all remaining negative occurrences of $D$ now refer to the definer $\overline{D}$.
Addition of axioms happens first, then replacement of clauses.

We apply \ref{def-CIs-first}--\ref{def-CIs-second} on $\StepThreeClauses$ and denote the result by $\Kmc_0$.

\begin{lemma}\label{lem:new-neg-definers}
 Every model of $\Kmc_0$ can be transformed into a model of 
 $
 \Kmc_0\cup\{\overline{D}\equiv\neg D\mid D\in\sig{\StepThreeClauses}\}
 $
 by only changing the interpretation of definers.
\end{lemma}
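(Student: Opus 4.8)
The plan is to take an arbitrary model $\Imc$ of $\Kmc_0$ and build the required model $\Imc'$ by leaving the interpretation of every name untouched except the fresh definers $\overline{D}$, for which I set $\overline{D}^{\Imc'}=\Delta^{\Imc}\setminus D^{\Imc}$. By construction $\Imc'\models\overline{D}\equiv\neg D$ for every $D\in\sig{\StepThreeClauses}$, so all that remains is to verify $\Imc'\models\Kmc_0$. Recall that $\Kmc_0$ consists of the original clauses of $\StepThreeClauses$ together with the CIs added by \ref{def-CIs-first} and \ref{def-CIs-second}. The clauses of $\StepThreeClauses$ mention no $\overline{D}$, and $\Imc'$ differs from $\Imc$ on no symbol they use, so they stay satisfied; the entire argument concerns the added CIs.

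The first thing I would establish is a decoupling observation: for every clause $\cls$, the set $(\cls^c)^{\Imc'}$ coincides with the interpretation obtained by reading each $\neg D$ in $\cls$ semantically. This holds because $\cls^c$ is $\cls$ with every $\neg D$ replaced by $\overline{D}$, and in $\Imc'$ we have $\overline{D}^{\Imc'}=(\neg D)^{\Imc'}$ while $\Imc'$ and $\Imc$ agree on every $D$ and every non-definer name. In particular, for a ground clause $\cls$ we get $(\cls^c)^{\Imc'}=\Delta^{\Imc}$ if $\Imc\models\cls$ and $(\cls^c)^{\Imc'}=\emptyset$ otherwise.

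With this in hand, each added CI reduces to the contrapositive of the universal-restriction clause it came from. For \ref{def-CIs-first}, the CI $\overline{D}\sqsubseteq\forall r^-.\cls^c$ stems from $\cls\vee\forall r.D(x)\in\StepThreeClauses$; reading this clause semantically in $\Imc$ and using the observation above, every $d\in\Delta^{\Imc}$ either satisfies $d\in(\cls^c)^{\Imc'}$ or sends all its $r$-successors into $D^{\Imc}$. Taking any $e\in\overline{D}^{\Imc'}=\Delta^{\Imc}\setminus D^{\Imc}$ and any $d$ with $\reltup{d}{e}\in r^{\Imc}$, the element $e$ witnesses that $d$ does not send all $r$-successors into $D^{\Imc}$, whence $d\in(\cls^c)^{\Imc'}$; as $d$ was arbitrary this gives $e\in(\forall r^-.\cls^c)^{\Imc'}$. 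For \ref{def-CIs-second}, the CI $\overline{D}\sqsubseteq\forall r^-.(\neg\{a\}\sqcup\cls^c)$ comes from the ground clause $\cls\vee\forall r.D(a)$: if $\Imc\models\cls$ then $(\cls^c)^{\Imc'}=\Delta^{\Imc}$ and the inclusion is trivial, and otherwise $a^{\Imc}\in(\forall r.D)^{\Imc}$, so any $d$ with $\reltup{d}{e}\in r^{\Imc}$ for some $e\notin D^{\Imc}$ must differ from $a^{\Imc}$ (else $e$ would contradict $a^{\Imc}\in(\forall r.D)^{\Imc}$), placing $d\in(\neg\{a\})^{\Imc'}$ as required.

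The subtle point I expect to require the most care is that the right-hand sides $\cls^c$ of the added CIs may themselves contain fresh definers $\overline{D'}$, so changing all the $\overline{D'}$ at once could in principle repair one CI while breaking another. This is exactly why I would prove the decoupling observation first: it shows that evaluating every added CI in $\Imc'$ amounts to evaluating the corresponding universal-restriction clause of $\StepThreeClauses$ under $\Imc$, which separates the simultaneous redefinition of all definers from the verification of any single CI and makes the two case analyses above self-contained.
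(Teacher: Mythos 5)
Your proof is correct, but it takes a genuinely different route from the paper's. The paper proceeds one definer at a time and distinguishes whether $D$ occurs only under existential or only under universal role restrictions; in the universal case it modifies \emph{both} $D$ and $\overline{D}$, setting $D^{\Imc'}=D^\Imc\setminus\overline{D}^\Imc$ and $\overline{D}^{\Imc'}=\Delta^\Imc\setminus D^{\Imc'}$, so that $\overline{D}$ only grows. As a consequence, the paper must re-verify the clauses of $\StepThreeClauses$ containing $\forall r.D$ (which it does by exploiting that $\Imc$ already satisfies the CIs added in \textbf{D1}--\textbf{D2}), must then separately handle the occurrences of $\overline{D}$, and must finish with an iteration over all definers. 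Your construction instead leaves every original definer untouched and redefines all the $\overline{D}$ simultaneously as $\Delta^\Imc\setminus D^\Imc$; the clauses of $\StepThreeClauses$ are then trivially preserved, and you re-derive each added CI from scratch out of the semantics of the clause it was generated from, using the decoupling observation to evaluate $\cls^c$ in $\Imc'$. This buys several simplifications: no case split on how $D$ occurs, no induction over definers, no reliance on the fact that clauses with positive literals $D(t)$ have been removed, and no use of $\Imc$'s satisfaction of the added CIs at all---everything follows from $\Imc\models\StepThreeClauses$. It also yields a slightly sharper statement, since only the fresh definers $\overline{D}$ are reinterpreted. What the paper's per-definer construction preserves and yours does not is the inclusion $\overline{D}^\Imc\subseteq\overline{D}^{\Imc'}$ (your redefinition may shrink $\overline{D}$); but nothing in the lemma or in its later use in Lemma~\ref{lem:definer-elimination-preparation} requires this, and your decoupling observation is precisely what makes the shrinking harmless for the positive occurrences of fresh definers inside the right-hand sides $\cls^c$ of other CIs---the subtle interaction you correctly single out at the end.
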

\conf{
  \begin{proofsk}
    We show how to construct a model of $\Kmc_0\cup\{\overline{D}\equiv\neg D\mid D\in\sig{\StepThreeClauses}\}$ iteratively by working on one~$D$ at a time.
    Let $\Imc$ be a model of $\Kmc_0$ s.t.\ for some definer $D\in\sig{\StepThreeClauses}$, $\Imc\not\models \overline{D}\equiv\neg D$.
    The normalisation and the calculus make sure that $D$ occurs either only under existential restrictions or only under universal restrictions.
    In the former case, we note that no CIs are introduced for~$\overline{D}$, and we can obtain a model $\Imc'$ with $\Imc'\models\overline{D}\equiv\neg D$ by setting $\overline{D}^{\Imc'}=(\neg D)^\Imc$ and keeping everything else the same.
    Otherwise, $D$ occurs only under universal restrictions, and we transform $\Imc$ into a model $\Imc'$ of $\overline{D}\equiv\neg D$ by setting \(D^{\Imc'}=D^\Imc\setminus\overline{D}^\Imc\) and \( \overline{D}^{\Imc'}=\Delta^\Imc\setminus D^{\Imc'} \).
    Then we prove that $\Imc'$ is still a model of $\Kmc_0$.
  \end{proofsk}
}
\rep{
\begin{proof}
 Let $\Imc$ be a model of $\Kmc_0$ s.t.\ for some definer $D\in\sig{\Clauses}$, $\Imc\not\models \overline{D}\equiv\neg D$. The normalisation and the calculus make sure that $D$ either occurs only under existential restrictions or only under universal restrictions. In the former case, we note that no CIs are introduced for $\overline{D}$, and we can obtain a model $\Imc'$ with $\Imc'\models\overline{D}\equiv\neg D$ by setting $\overline{D}^{\Imc'}=(\neg D)^\Imc$ and keeping everything else the same. Otherwise, $D$ occurs only under universal restrictions, 
 and we transform $\Imc$ into a model $\Imc'$ of $\overline{D}\equiv\neg D$ by setting:
\(
 D^{\Imc'}=D^\Imc\setminus\overline{D}^\Imc\) and %
\( \overline{D}^{\Imc'}=\Delta^\Imc\setminus D^{\Imc'}
\).

Clearly, $\Imc'\models \overline{D}\equiv\neg D$. We show that also
$\Imc'\models\Kmc_0$, for which we need to consider occurrences of $D$ and
$\overline{D}$. For $D$, since $D^{\Imc'}\subseteq D^\Imc$, we only need to
consider positive occurrences in $\Kmc_0$.
Since $D$ does not occur under existential role restrictions,
and clauses with literals of the form $D(t)$ are eliminated by the saturation procedure,
the only occurrences of $D$ we need to consider are of the form $\cls\vee\forall r.D(t)$.

Recall that every clause either contains only variables or only individual names as terms.
Assume $L_1(x)\vee\ldots L_n(x)\vee\forall r.D(x)\in \Kmc_0$, 
and assume there exists some $\reltup{d}{e}\in r^\Imc$ s.t.\ $e\in(D^{\Imc}\cap\overline{D}^\Imc)$. Since $e\in\overline{D}^\Imc$ and $\Imc\models \overline{D}\sqsubseteq\forall r^-.(L_1^c\sqcup\ldots\sqcup L_n^c)$, we obtain $d\in(L_1^c\sqcup\ldots\sqcup L_1^c)^\Imc$,
which means $d\in L_i$ for some $1\leq i\leq n$. Consequently, $e$ does not need to be in $D^\Imc$ for the clause to be satisfied, and we obtain also $\Imc'\models L_1(x)\vee\ldots L_n(x)\vee\forall r.D(x)$. Consider $\cls\vee\forall r.D(a)\in\Kmc_0$, $\Imc\not\models \cls$ and $\reltup{a^\Imc}{d}\in r^{\Imc}$. We then have 
$\overline{D}\sqsubseteq\forall r^-.(\neg\{a\}\sqcup \cls^c)\in\Kmc_0$. From $\Imc\not\models \cls$, it follows that $(\cls^c)^\Imc=\emptyset$,
and consequently that $\Imc\models\overline{D}\sqsubseteq\forall r^-.\neg\{a\}$. Therefore, $a^\Imc$ cannot have an $r$-successor that satisfies $\overline{D}$, so that $\Imc\models \cls\vee\forall r.D(a)$ implies $\Imc'\models \cls\vee\forall r.D(a)$. 

It remains to consider $\overline{D}$. We note that $\overline{D}^{\Imc'}\supseteq\overline{D}^\Imc$, and thus we have to consider the negative occurrences of $\overline{D}$ in $\Kmc_0$, which are all of the form $\overline{D}\sqsubseteq\forall r^-.C$, and domain elements $d\in\overline{D}^{\Imc'}\setminus\overline{D}^\Imc$. We first note that for those domain elements $d$, $d\not\in D^\Imc$, since $d\not\in\overline{D}^\Imc$ and $d\in D^\Imc$ would also imply $d\in D^{\Imc'}$ and $d\not\in\overline{D}^{\Imc'}$. 
We show that for every $e$ s.t. $\reltup{e}{d}\in r^\Imc$, $d\in C$. We consider the different possibilities for $\overline{D}\sqsubseteq\forall r^-.C\in\Kmc_0$, and show that in each case, $e\in C^{\Imc'}$. 1) $\overline{D}\sqsubseteq\forall r^-.(L_1^c\sqcup\ldots\sqcup L_n^c)\in\Kmc_0$ generated from $L_1(x)\vee\ldots\vee L_n(x)\vee\forall r.D(x)\in\Kmc_0$. Since $d\not\in D^\Imc$, $e\not\in(\forall r.D)^\Imc$, and therefore $e\in(L_1\sqcup\ldots\sqcup L_n)^{\Imc'}$. 2) $\overline{D}\sqsubseteq\forall r^-.(\neg\{a\}\sqcup \cls^c)\in\Kmc_0$ generated from $\cls\vee\forall r.D(a)$.
If $e\neq a^\Imc$, then $e\in(\neg\{a\}\sqcup \cls^c))^{\Imc'}$.
Otherwise, $e=a^\Imc$, and since $d\not\in D^\Imc$ it follows that $\Imc\not\models\forall r.D(a)$, and consequently $\Imc\models \cls$, which in turn implies $e(\cls^c)^\Imc=\Delta^\Imc$ and $e\in(\neg\{a\}\sqcup \cls^c))^{\Imc'}$
\sophie{Explain previous statement in more details (reason why $\cls^c$ is either everything or nothing)}
\patrick{Better? Note that I mentioned the all or nothing aspect of $\cls^c$ before the lemma.}
It follows that %
$\Imc'\models \overline{D}\sqsubseteq \forall r^-.C$ for every $\overline{D}\sqsubseteq\forall r^-.C\in\Kmc_0$. 

We obtain that $\Imc'\models\Kmc_0$. We can now proceed in that manner iteratively for every definer $D\in\sig{\StepThreeClauses}$ to obtain a model $\Imc^*$ of $\Kmc_0\cup\{\overline{D}\equiv\neg D\mid D\in\sig{\StepThreeClauses}\}$.
\end{proof}
}

Let $\Kmc_1$ be the result of applying~\ref{def-CIs-first}--\ref{def-CIs-last}.

\begin{lemma}\label{lem:definer-elimination-preparation}
  Every model of $\Kmc_1$ can be turned into a model of $\StepThreeClauses$ by only changing the interpretation of definers, and every model of $\StepThreeClauses$ can be extended to a model of $\Kmc_0$ by setting $\overline{D}^\Imc=\neg D^\Imc$ for all definers $D$ occurring in $\StepThreeClauses$.
\end{lemma}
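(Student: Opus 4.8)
The statement packages a two-way correspondence between the models of the clause set $\StepThreeClauses$ and those of the denormalised KBs $\Kmc_0$ and $\Kmc_1$, witnessed by interpreting the fresh complementary definers as $\overline{D}=\neg D$. I would prove the two halves separately, treating the extension $\StepThreeClauses\to\Kmc_0$ first, since it is the easy direction and is essentially the computation already used inside the proof of Lemma~\ref{lem:new-neg-definers}.

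\textbf{From $\StepThreeClauses$ to $\Kmc_0$.} Let $\Imc\models\StepThreeClauses$ and put $\overline{D}^\Imc=\neg D^\Imc$ for every definer $D$. Since $\Kmc_0$ arises from $\StepThreeClauses$ by \emph{only adding} the axioms \ref{def-CIs-first}--\ref{def-CIs-second}, and these are the only ones mentioning the symbols $\overline{D}$, it suffices to check that the added axioms hold under $\overline{D}=\neg D$. For \ref{def-CIs-first}, coming from a clause $\cls\vee\forall r.D(x)$: if $d\notin D^\Imc$ and $(e,d)\in r^\Imc$, then $e\notin(\forall r.D)^\Imc$, so the clause forces $e\in(\cls^c)^\Imc$, whence $d\in(\forall r^-.\cls^c)^\Imc$. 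The case \ref{def-CIs-second} is identical, with the guard $\neg\{a\}$ absorbing successors different from $a^\Imc$ and the all-or-nothing behaviour of $\cls^c$ (every ground literal translates to an $\exists\topRole$-concept) turning $\Imc\models\cls$ into $(\cls^c)^\Imc=\Delta^\Imc$.

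\textbf{From $\Kmc_1$ to $\StepThreeClauses$.} Here I would start from $\Imc\models\Kmc_1$ and build $\Imc'$ by keeping the domain and all non-definer symbols fixed and reinterpreting the definers so that $\overline{D}^{\Imc'}=\neg D^{\Imc'}$. Using the dichotomy the calculus guarantees (each definer occurs under only one kind of role restriction), I set $D^{\Imc'}=D^\Imc$ for existential definers and $D^{\Imc'}=D^\Imc\setminus\overline{D}^\Imc$ for universal ones, in each case taking $\overline{D}^{\Imc'}=\Delta^\Imc\setminus D^{\Imc'}$; this is precisely the repair of Lemma~\ref{lem:new-neg-definers}. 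The two families of clauses are then recovered differently. Every variable clause equals, as a CI, its replacement from \ref{def-CIs-third} (for clauses with a negative definer, which becomes the left-hand side) or from \ref{def-CIs-last} (otherwise); these replacements are $\overline{D}$-free, so $\Imc\models\Kmc_1$ already supplies them, and the only thing to re-verify is that shrinking a universal definer does not destroy a positive $\forall r.D$ occurrence, which is bought by the axioms $\overline{D}\sqsubseteq\forall r^-.\cdots$ exactly as in Lemma~\ref{lem:new-neg-definers}. Every individual clause is recovered from its rewritten form, in which $\neg D(a)$ was replaced by $\overline{D}(a)$: since now $\overline{D}^{\Imc'}=\neg D^{\Imc'}$, rewritten and original literal coincide, and whenever $a^\Imc$ lay in the removed part $D^\Imc\cap\overline{D}^\Imc$ the literal $\neg D(a)$ holds outright, while otherwise the definition $D\sqsubseteq\neg\{a\}\sqcup\cls^c$ forces the remaining disjunction, again using that $\cls^c$ is $\emptyset$ or $\Delta^\Imc$.

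\textbf{Main obstacle.} The genuinely delicate part is the individual clauses in the second direction. In $\Kmc_1$ the complementary definers are a priori independent of $D$ (for existential definers they carry no defining axiom at all), so the replacement of $\neg D(a)$ by $\overline{D}(a)$ is not literally invertible on the given model: a clause may be satisfied by a spurious $\overline{D}(a)$ while $a^\Imc\in D^\Imc$. The argument must therefore show that the definer repair, together with the definitions $D\sqsubseteq\neg\{a\}\sqcup\cls^c$ and the $\exists\topRole$-headed (hence all-or-nothing) shape of ground-literal translations, transfers every satisfied disjunct to $\Imc'$. Closing this transfer for clauses carrying several ground negative definers, while simultaneously leaving the $\forall r.D$ and $\exists r.D$ occurrences intact under the change of $D$, is where the real work lies, and where the structural properties of the calculus (the dichotomy and Lemma~\ref{lem:new-neg-definers}) are indispensable.
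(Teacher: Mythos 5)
Your second half (extending a model of $\StepThreeClauses$ to $\Kmc_0$ by setting $\overline{D}^\Imc=(\neg D)^\Imc$ and verifying the axioms added in \ref{def-CIs-first}--\ref{def-CIs-second}) is correct and coincides with the paper's argument. The first half, however, is not a proof. You take a model $\Imc$ of $\Kmc_1$, apply the definer surgery of Lemma~\ref{lem:new-neg-definers} to it directly, and then try to recover satisfaction of each original clause of $\StepThreeClauses$ from its rewritten counterpart in $\Kmc_1$; you yourself isolate the failure case---a ground clause whose rewriting is satisfied only ``spuriously'', via $a^\Imc\in D^\Imc\cap\overline{D}^\Imc$---and then state that closing it ``is where the real work lies'', without doing that work. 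This gap is genuine, and it cannot be closed with only the ingredients you name (the \textbf{D4} definitions, the all-or-nothing behaviour of $\cls^c$, and the surgery): if $\StepThreeClauses$ contains a ground clause $\neg D_1(a)\vee\neg D_2(b)$, then the relevant part of $\Kmc_1$ is the rewritten clause $\overline{D_1}(a)\vee\overline{D_2}(b)$ together with the axioms $D_1\sqsubseteq\neg\{a\}\sqcup\exists\topRole.(\{b\}\sqcap\overline{D_2})$ and $D_2\sqsubseteq\neg\{b\}\sqcup\exists\topRole.(\{a\}\sqcap\overline{D_1})$, and the interpretation with $D_1^\Imc=\overline{D_1}^\Imc=\{a^\Imc\}$ and $D_2^\Imc=\overline{D_2}^\Imc=\{b^\Imc\}$ satisfies all of these while falsifying the original clause; no reinterpretation of definers can then rescue that clause, since nothing in $\Imc$ witnesses it. Any completion of your route must therefore exploit occurrences of these definers elsewhere in $\StepThreeClauses$ (their $\quant r.D$ occurrences and the associated \ref{def-CIs-first}/\ref{def-CIs-second} axioms), which your ground-clause argument never uses. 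Note also that citing Lemma~\ref{lem:new-neg-definers} for the surgery is not licensed here: that lemma presupposes a model of $\Kmc_0$, whereas your $\Imc$ is only assumed to satisfy $\Kmc_1$.

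The paper's proof is organised precisely so that this transfer problem never arises. It first establishes two entailments by inspection of \ref{def-CIs-third}--\ref{def-CIs-last}: $\Kmc_1\models\Kmc_0$ and $\Kmc_0\cup\{\overline{D}\equiv\neg D\mid D\in\sig{\StepThreeClauses}\}\models\Kmc_1$. Given $\Imc\models\Kmc_1$, the first entailment already makes $\Imc$ a model of $\Kmc_0$, so the original clauses of $\StepThreeClauses$---with their genuine $\neg D(a)$ literals---hold \emph{before} any modification of the model; Lemma~\ref{lem:new-neg-definers} is then applied in its intended setting, producing $\Imc'$ satisfying $\Kmc_0\cup\{\overline{D}\equiv\neg D\}$ while changing only definers, and since $\StepThreeClauses\subseteq\Kmc_0$ this finishes the direction. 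In that route, the difficulty you ran into is concentrated in the purely logical claim $\Kmc_1\models\Kmc_0$ (immediate for variable clauses and for ground clauses with a single negative definer literal, by the all-or-nothing behaviour of $\cls^c$), rather than in a pointwise transfer of satisfaction across a modified model. To repair your write-up, prove that entailment first and then invoke Lemma~\ref{lem:new-neg-definers} as a black box; as it stands, your proposal is incomplete exactly at its critical step.
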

\begin{proof}
 Looking at the axioms added in~\ref{def-CIs-third}--\ref{def-CIs-last}, one obtains that $\Kmc_1\models\Kmc_0$ 
 and $\Kmc_0\cup\{\overline{D}\equiv\neg D\mid D\in\sig{\StepThreeClauses}\}\models\Kmc_1$. 
 By Lemma~\ref{lem:new-neg-definers}, we can thus transform any model $\Imc$ of $\Kmc_1$ into a model $\Imc'$ of $\Kmc_0$ s.t.\ $\Imc'\models\overline{D}\equiv\neg D$ for every $D\in\sig{\StepThreeClauses}$, by only changing the interpretation of the definers in $\Imc$. $\Imc'$ is also a model of $\Kmc_1$. 
 Inspection of $\Kmc_1$ shows that every clause $\cls\in\StepThreeClauses$ has an axiom $\alpha\in\Kmc_0$ s.t.\ $\Imc'\models \cls$ if and only if $\Imc'\models\alpha$. It follows that $\Imc'$ is a model of $\StepThreeClauses$.

 Let $\Imc$ be a model of $\StepThreeClauses$ and let $\Imc'$ coincide with $\Imc$, except on $(\overline{D})^{\Imc'}=\neg D^\Imc$ for every definer $D$ occurring in $\StepThreeClauses$.
 Again, an inspection of $\Kmc_1$ reveals that, for every axiom $\alpha\in\Kmc_0$, there is some clause $\cls\in\StepThreeClauses$ s.t.\ $\Imc'\models \cls$ if and only if $\Imc'\models\alpha$. It follows that $\Imc'$ is a model of $\Kmc_1$.
 \end{proof}
 
 As a consequence of Lemma~\ref{lem:definer-elimination-preparation}, $\Kmc_1$ preserves all relevant entailments of $\StepThreeClauses$ not using definers, while now every negative occurrence of a definer is in a CI of the form $D\sqsubseteq C$. This allows us now to use the same technique as in~\cite{KoopmannSchmidt2014} to eliminate definers.
For each definer $D$ occurring in $\Kmc_1$, including the introduced definers~$\overline{D}$, we replace all axioms
$D\sqsubseteq C_1$, $\ldots$, $D\sqsubseteq C_n$ by a single axiom $D\sqsubseteq
C_1\sqcap\ldots\sqcap C_n$. If there is no such axiom, we add
$D\sqsubseteq\top$. We then eliminate each definer $D$ as follows:
\begin{enumerate}[left=0pt,%
label=\textbf{E\arabic*}]
 \item Each CI $D\sqsubseteq C[D]$ ($D$ occurs on both sides of the CI), is replaced by $D\sqsubseteq\nu X.C\rewriteto{D}{X}$,
 \label{def-elim-fixpoint}
 \item For every CI $\alpha=D\sqsubseteq C$, where $D\not\in \sig{C}$, remove~$\alpha$ and replace $D$ everywhere by $C$.\label{def-elim-replace}
\end{enumerate}
The result is then denoted by $\Kmc_2$. The next theorem follows from Lemma~1 and~2 
together with Ackermann's Lemma~\cite{ACKERMANN} and Generalised Ackermann's 
Lemma~\cite{GEN-ACKERMANN}, which concern the corresponding transformation in second-order logics. 
\begin{restatable}{theorem}{LemDeclausification}\label{lem:declausification}
 For every definer-free Boolean KB $\Kmc$, $\Kmc_2\models\Kmc$ if and only if $\StepThreeClauses\models\Kmc$.
\end{restatable}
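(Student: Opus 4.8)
The plan is to factor the equivalence through the intermediate KB $\Kmc_1$, proving separately that (i)~$\StepThreeClauses\models\Kmc$ iff $\Kmc_1\models\Kmc$ and (ii)~$\Kmc_1\models\Kmc$ iff $\Kmc_2\models\Kmc$, for every definer-free Boolean KB $\Kmc$. Part~(i) is essentially Lemma~\ref{lem:definer-elimination-preparation}: it establishes that the models of $\Kmc_1$ and those of $\StepThreeClauses$ correspond to each other \emph{by reinterpreting only definers}. Since $\Kmc$ contains no definers, such a reinterpretation leaves the truth value of $\Kmc$ untouched, so any countermodel of one entailment yields a countermodel of the other; this gives~(i) in both directions.

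For part~(ii), I would first note that the combination step, replacing $D\sqsubseteq C_1,\ldots,D\sqsubseteq C_n$ by $D\sqsubseteq C_1\sqcap\ldots\sqcap C_n$ (or adding $D\sqsubseteq\top$), is a plain logical equivalence. After it, every definer $D$ occurring in $\Kmc_1$ --- including the auxiliary definers $\overline{D}$ --- has exactly one defining CI $D\sqsubseteq C_D$, and, as ensured by the normalisation together with the replacement of $\neg D(a)$ by $\overline{D}(a)$ and Lemma~\ref{lem:new-neg-definers}, occurs only positively (monotonically) in all remaining axioms, every negative occurrence having been pushed into the separate names $\overline{D}$. Viewing $\Kmc_1$ as a second-order statement $\exists D_1\ldots\exists D_k.\Kmc_1$ over the definers, each definer is then a second-order variable with a single upper bound $D\sqsubseteq C_D$ and otherwise monotone occurrences, which is exactly the shape required by the Ackermann lemmas.

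I would then justify the two elimination rules as instances of these lemmas, eliminating definers one at a time. If $D\notin\sig{C_D}$, Ackermann's Lemma~\cite{ACKERMANN} states that $\exists D.\Kmc_1$ is equivalent to substituting $C_D$ for $D$ throughout, which is exactly rule~\ref{def-elim-replace}. If $D\in\sig{C_D}$, then because $D$ occurs positively in $C_D$ the operator $X\mapsto C_D\rewriteto{D}{X}$ is monotone and the even-negation condition on $\nu X$ holds; the Generalised Ackermann Lemma~\cite{GEN-ACKERMANN} then allows replacing the self-referential definition by the greatest fixpoint $D\sqsubseteq\nu X.C_D\rewriteto{D}{X}$ (rule~\ref{def-elim-fixpoint}), after which $D$ no longer occurs in its own defining concept and rule~\ref{def-elim-replace} applies. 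Each elimination produces a KB equivalent to the previous one on all interpretations that disregard the eliminated definer, hence preserves all definer-free entailments; an induction over the definers yields~(ii), and combining with~(i) gives the theorem.

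The main obstacle I anticipate is discharging the precondition of the Ackermann lemmas, namely the \emph{polarity} requirement that in $\Kmc_1$ each definer has a unique defining CI and occurs only positively elsewhere. This is precisely where the design of rules~\ref{def-CIs-first}--\ref{def-CIs-last} pays off: splitting every definer into $D$ and $\overline{D}$ via Lemma~\ref{lem:new-neg-definers} is what guarantees the monotone occurrence pattern. A secondary subtlety is the treatment of mutual dependencies among definers, since $C_D$ may mention other definers eliminated later, possibly creating nested fixpoints; one must fix an elimination order (appealing to the ordering $\prec$ on definers) and verify that the Generalised Ackermann step remains applicable throughout --- which is exactly the self-referential situation that lemma is built to handle.
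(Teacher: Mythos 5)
Your proposal is correct and takes essentially the same route as the paper: the paper justifies this theorem in a single sentence, stating that it follows from Lemmas~\ref{lem:new-neg-definers} and~\ref{lem:definer-elimination-preparation} together with Ackermann's Lemma~\cite{ACKERMANN} and the Generalised Ackermann Lemma~\cite{GEN-ACKERMANN}, which is exactly the decomposition through $\Kmc_1$ and the polarity/fixpoint arguments you spell out. Your writeup is simply a more explicit expansion of that intended argument, including the correct observation that the D-rules' splitting of definers into $D$ and $\overline{D}$ is what discharges the Ackermann polarity precondition.
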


\subsection{Step 4: Negate Result}

The CIs in the Boolean KB after \ref{step_three} %
are only needed to eliminate definers and can be discarded.
Thus, we obtain a conjunction of disjunctions of (possibly negated) assertions, which can be directly negated to obtain a disjunction of conjunctions of assertions. From this, we eliminate occurrences of universal roles and make sure that only least fixpoint expressions are used. 
Universal roles initially represent the negations of CIs.
Further universal roles are introduced in \textbf{Steps~2} and~\textbf{3}. 
In each case, universal roles are only used in existential role restrictions. Eliminating definers only introduces greatest fixpoints. 
Thus, by pushing negations inside, we can make sure that the hypothesis contains only least fixpoints, and universal roles occur only in universal role restrictions. 
To obtain the final hypothesis, we pull out universal roles using the following equivalences together with standard DNF transformations:
\begin{align*}
 \exists r.(C_1\sqcap\forall\topRole.C_2) &\Leftrightarrow \exists r.C_1\sqcap \forall\topRole.C_2 \\
 \forall r.(C_1\sqcup\forall\topRole.C_2) &\Leftrightarrow \forall r.C_1\sqcup\forall\topRole.C_2\\
 (\forall\topRole.C)(a) &\Leftrightarrow \top\sqsubseteq C
\end{align*}
Finally, to ensure that the hypothesis returned is consistent, the check $\Kmc\wedge \hyp \not\models \perp$ is performed using an external reasoner. This eliminates false hypotheses in cases for which there is no suitable hypothesis to explain $\Psi$ in the given signature of abducibles $\Sig$.
 
A direct consequence of Theorems~\ref{lem:completeness} and~\ref{lem:declausification} is:
\begin{theorem}
Let $\tup{\Kmc,\Psi,\Sig}$ be an abduction problem.
Applying \textbf{Steps 1--4} 
to this problem produces a disjunction of \ALCOImu KBs that is a solution of the abduction problem.
\end{theorem}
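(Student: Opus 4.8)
The plan is to verify the four conditions \ref{en:consistent}--\ref{en:semantic-minimality} of Definition~\ref{KBAbductionProblem} for the output $\hyp$ of \ref{step_four}, exploiting the duality between abduction and signature-restricted consequence finding. The key observation is that for any KB $\hyp'$ the explanation condition $\Kmc\wedge\hyp'\models\Psi$ is equivalent to $\Kmc\wedge\neg\Psi\models\neg\hyp'$. Writing $V$ for the assertional part of the \ref{step_three} output $\Kmc_2$ that \ref{step_four} negates (so that $\neg\hyp\equiv V$, the rewritings of \ref{step_four} being equivalence-preserving), Conditions~\ref{en:explains} and~\ref{en:semantic-minimality} together amount to the single statement that $V$ is, modulo~$\Kmc$, the strongest $\Sig$-consequence of $\Kmc\wedge\neg\Psi$. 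Thus the whole proof reduces to showing that \ref{step_one}--\ref{step_three} compute this interpolant and that \ref{step_four} negates it correctly.

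For the soundness half (Condition~\ref{en:explains}) I would argue as follows. \ref{step_one} is entailment-preserving by the equivalences listed there (in particular $\neg(C\sqsubseteq D)\equiv(\exists\topRole.(C\sqcap\neg D))(a)$ and the conservativeness of the introduced definers), so $\Clauses\cup\SupportClauses$ agrees with $\Kmc\wedge\neg\Psi$ on all definer-free entailments. Every rule of Fig.~\ref{fig:calculus} is sound and the definer clauses it introduces are conservative, so every model of $\Clauses\cup\SupportClauses$ extends to a model of $\mathbf{M}_\Sig$; by Lemma~\ref{lem:definer-elimination-preparation} such a model also satisfies the definer-free $V$. Since $V$ is definer-free this yields $\Kmc\wedge\neg\Psi\models V$, i.e.\ $\Kmc\wedge\hyp\models\Psi$, which is exactly~\ref{en:explains}.

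The completeness half (Condition~\ref{en:semantic-minimality}) is where the two main theorems enter. Let $\hyp'$ be any \ALC KB satisfying~\ref{en:explains} and~\ref{en:signature}. Then $\neg\hyp'$ is a Boolean \ALC KB with $\sig{\neg\hyp'}\subseteq\Sig$ and $\Clauses\cup\SupportClauses\models\neg\hyp'$, so Theorem~\ref{lem:completeness} gives $\Clauses\cup\mathbf{M}_\Sig\models\neg\hyp'$. Combining $\Clauses\equiv\Kmc$ (on definer-free consequences) with Theorem~\ref{lem:declausification}, extended by a conservativity argument to the presence of $\Kmc$, I would rewrite this as $\Kmc\wedge\Kmc_2\models\neg\hyp'$. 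It then remains to pass from $\Kmc_2$ to its kept assertional part $V$, i.e.\ to show $\Kmc\wedge V\models\neg\hyp'$; granting this, $\Kmc\wedge V\wedge\hyp'\models\bot$, equivalently $\Kmc\wedge\hyp'\models\neg V=\hyp$, which is~\ref{en:semantic-minimality}. Condition~\ref{en:signature} holds because \ref{step_two}--\ref{step_four} only ever introduce names from $\Sig$ (besides definers, which are eliminated), and Condition~\ref{en:consistent} is guaranteed by the explicit consistency check with an external reasoner at the end of \ref{step_four}, which is also what makes the procedure report no solution exactly when every candidate is inconsistent with~$\Kmc$. Finally, membership in a disjunction of \ALCOImu KBs follows from \ref{step_four}: definer elimination introduces only greatest fixpoints, so pushing the negation inside turns them into least fixpoints, the three $\topRole$-equivalences remove every universal role (which occurs only under role restrictions), and the DNF transformation yields the disjunctive form.

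The main obstacle I anticipate is precisely the step just flagged in \ref{en:semantic-minimality}: justifying that discarding the CIs of $\Kmc_2$ in \ref{step_four} and keeping only $V$ loses nothing modulo~$\Kmc$, i.e.\ $\Kmc\wedge V\models\Kmc_2$. Because the hypothesis is always applied together with~$\Kmc$, what must be shown is not that $V$ alone re-derives the discarded CIs but that $\Kmc$ does. I would establish this by a structural analysis of the calculus: inspecting the rules of Fig.~\ref{fig:calculus} one sees that a clause whose only term is the variable~$x$ can be produced solely from premises of the same shape, so every remaining CI descends from the TBox part of $\Kmc\wedge\neg\Psi$; the variable clauses contributed by $\neg\Psi$ are definer definitions whose content, upon the elimination in \ref{step_three}, attaches to the observation's individuals and hence surfaces in $V$ rather than as a CI. A secondary, more routine difficulty is the bookkeeping around the background set $\Clauses$, which, unlike in plain uniform interpolation, is never fully eliminated: one has to check that the definers left in $\Clauses$ are disjoint from those of $\mathbf{M}_\Sig$ so that the two denormalisations compose, and that each rewriting in \ref{step_four} is genuinely equivalence-preserving.
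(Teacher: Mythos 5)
Your proposal is correct and takes essentially the same route as the paper, whose entire proof is that the theorem is ``a direct consequence of Theorems~\ref{lem:completeness} and~\ref{lem:declausification}''---precisely your combination of the duality $\Kmc\wedge\hyp'\models\Psi\Leftrightarrow\Kmc\wedge\neg\Psi\models\neg\hyp'$ with those two theorems, the equivalence-preserving rewritings of \ref{step_four}, and the final reasoner check for Condition~\ref{en:consistent}. The obstacle you flag---that discarding the CIs of $\Kmc_2$ needs $\Kmc\wedge V\models\Kmc_2$, and that Theorem~\ref{lem:declausification} must be relativised to $\Clauses$ (which works because \ref{step:move-clauses} keeps the definers of $\Clauses$ and $\StepThreeClauses$ disjoint)---is a genuine obligation, but not a divergence from the paper: the paper disposes of it only by the unproved remark in \ref{step_four} that those CIs ``are only needed to eliminate definers and can be discarded'', so your structural analysis supplies detail the published proof omits rather than contradicts.
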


\renate{Give steps?}
\begin{example}\label{ex:denormalise}
 For the clauses $\StepThreeClauses$ %
from Ex.~\ref{ex:saturate}, we obtain 
 the Boolean KB $\neg A_1(a)\vee\exists\topRole.(A_1\sqcap\forall r.\top\sqcap(\neg A_2\sqcap\exists r.\bot))(a)$, which can be simplified to $\neg A_1(a)\vee\exists\topRole.(A_1\sqcap \neg A_2)(a)$.
 The final hypothesis is equivalent to $A_1(a)\wedge (A_1\sqsubseteq A_2)$.
\end{example}

\section{Spaces of Independent Explanations}\label{sec:redundancy}
\label{hypothesis-filtering}
The abduction problem in Def.~\ref{KBAbductionProblem} searches for semantically minimal 
hypotheses. 
However, without extra constraints, the
result from the method in the last section may include redundant disjuncts.
Consider the following example%
\begin{align*}
&\KB = \{F \sqsubseteq B, D \sqcap E \sqsubseteq \perp, C(a), E(a)\} \\
& \psi = (\exists r.A) \sqcap C \sqsubseteq (\exists r.B) \sqcup D \\
& \Sig = \{A, B, C, D, E, F\}
\end{align*}
and the following three hypotheses $\hyp_1 = A \sqsubseteq B \lor C \sqsubseteq D$, 
$\hyp_2 = A \sqsubseteq B$ and $\hyp_3 = A \sqsubseteq F$. All of these 
satisfy~\ref{en:consistent}--~\ref{en:signature} in 
Def.~\ref{KBAbductionProblem}. $\Kmc\wedge\hyp_3$ entails both $\hyp_1$ and 
$\hyp_2$, and 
thus $\hyp_3$ does not satisfy ~\ref{en:semantic-minimality}.
However, it is also the case that $\KB \wedge \hyp_1 \equiv \KB \wedge \hyp_2$. In fact, both 
$\hyp_1$ and $\hyp_2$ are semantically minimal.
However, $\hyp_2$ is shorter, and would thus be preferred as an answer, while $\hyp_1$ contains a 
disjunct that is inconsistent with $\Kmc$.

To account for these inter-disjunct redundancies, we extend Def.~\ref{KBAbductionProblem}, following \cite{Del-PintoSchmidt2019}.  For this, we first need to make sure that disjunctions are pulled out where possible: a disjunction of KBs is in \emph{disjunctive normal form} if every concept of the form $C\sqcup D$ only occurs in a CI or in an assertion under a universal role restriction. 

\begin{definition}\label{KBAbductionProblemFull}
Let $\tup{\Kmc,\Psi,\Sig}$ be an abduction problem and $\hyp$ a solution to it satisfying Def.~\ref{KBAbductionProblem}. Then $\hyp = \KB_1 \lor \ldots \lor \KB_{n}$ is a \emph{space of independent explanations} if it is in disjunctive normal form and 
there is no disjunct $\KB_i$ in $\hyp$ such that: %
$$\KB\wedge \KB_i \models \KB_1 \lor \ldots \lor \KB_{i-1} \lor \KB_{i+1} \lor \ldots \lor \KB_{n}.$$
\end{definition}

In the above example, $\hyp_2$ satisfies Def.~\ref{KBAbductionProblemFull} while $\hyp_1$ does not, since $\KB\wedge(C \sqsubseteq D) \models A \sqsubseteq B$. It is worth noting that this condition also constrains hypotheses to those that are consistent with the background KB, i.e., if a hypothesis~$\hyp$ satisfies Def.~\ref{KBAbductionProblemFull} then condition \ref{en:consistent} of Def.~\ref{KBAbductionProblem} will also be satisfied and does not need to be checked separately.

In practice, the entailment tests in Def.~\ref{KBAbductionProblemFull} are done using a DL reasoner 
that checks the satisfiability of 
$\KB\wedge\KB_i\wedge\neg\KB_1\wedge\ldots\neg\KB_{i-1}\wedge\neg\KB_{i+1}
\wedge\ldots\wedge\neg\KB_n$. Unfortunately, current DL reasoners do not support fixpoint operators.
Positive occurrences of greatest fixpoints, and thus also least fixpoints occurring under the 
negation in $\neg\KB_j$, $1\leq j\leq n$, can be simulated using auxiliary concept names:  
for this, we replace $\nu X.C[X]$ by a fresh name $D$ and add the CI $D\sqsubseteq C\rewriteto{D}{X}$.
However, this does not work for positive occurrences of least fixpoints.
Thus, in practice, we cannot detect whether a disjunct with fixpoints is redundant. We therefore 
keep these disjuncts and include them for the redundancy test of the other disjuncts.

\section{Evaluation}
\label{sec:evaluation}

\newcommand{\corpc}{(\texttt{c})\xspace}
\newcommand{\corpi}{(\texttt{i})\xspace}
\newcommand{\KBexp}{(\texttt{K})\xspace}
\newcommand{\ABoxexp}{(\texttt{A})\xspace}
\newcommand{\KBexpA}{(\texttt{K1})\xspace}
\newcommand{\ABoxexpA}{(\texttt{A1})\xspace}
\newcommand{\KBexpB}{(\texttt{K2})\xspace}
\newcommand{\ABoxexpB}{(\texttt{A2})\xspace}
\newcommand{\hypinit}{\texttt{H1}\xspace}
\newcommand{\hypindep}{\texttt{H2}\xspace}

To evaluate our approach, we implemented a prototype in Scala and Java using the 
OWL-API.\footnote{http://owlapi.sourceforge.net/} We added redundancy elimination as 
in~\cite{KOOPMANN2013} to the saturation procedure, and implemented some equivalence-preserving 
syntactical rewritings to optimise the syntactical shape of the result. While we reused some code 
from the forgetting tool \LETHE~\cite{LETHE}, most had to be reimplemented from scratch due to the 
set-of-support strategy, and because the calculus required different data structures, indexing 
and strategies for determining relevant rule applications efficiently. The prototype and 
an explanation on how to repeat the experiment are available 
online.\footnote{\url{https://lat.inf.tu-dresden.de/evaluation-kr2020-dl-abduction/}} 

We created a corpus based on the ``DL Instantiation'' track from the OWL Reasoner 
Competition~2015~\cite{ORE2015}, as it provides a balanced mix of different ontologies containing 
both TBox axioms and ABox assertions.
Each ontology was restricted to its $\ALC$ fragment, where axioms 
such as domain restrictions or disjointness axioms were turned into corresponding CIs. 
We excluded inconsistent ontologies, as well as those of more than 50,000 and less than 100 axioms 
from the resulting set, leaving 202 ontologies. Statistics regarding the corpus are in Table~\ref{corpora-stats}. 

For these ontologies, we created two sets of abduction problems,~\ABoxexp and~\KBexp.
% which we describe in detail in the following. 
Both sets were split into larger and smaller problems to get a 
feeling of the impact of the observation size. The smaller problems contained observations of up to 
10 axioms, the larger problems observations of up to 20 axioms. The exact number of axioms was 
picked from a uniform distribution. This way, we obtained four sets of observations, in the 
following 
denoted \ABoxexpA, \ABoxexpB, \KBexpA and~\KBexpB.

The aim of \ABoxexp was to generate random observations with little assumptions on their shape.
\ABoxexp contained observations including randomly generated ABox assertions. 
For this, we randomly used 2--5 fresh individual names, for which we generated %
the required number of assertions of the form~$A(a)$ and $r(a,b)$ by randomly selecting concept and 
role names from the background ontology. \ABoxexp thus contained ABox abduction problems, but using 
names that may have only little relation to each other.
For \KBexp, we generated less random observations for KB abduction problems that better reflect the 
structure of the ontology, and use names that are related to each 
other. %
To have a realistic mix of CIs and assertions that reflects the typical shape of the 
background ontology, observations in \KBexp were generated by selecting the given number of 
axioms from the background ontology, which were then removed from the background. Since in large 
ontologies, a fully random selection would result in an observation of unrelated axioms, we first 
extracted a subset of at least 100 axioms by combining randomly selected \emph{genuine modules}: 
genuine modules are small subsets of the ontology that contain a given axiom and preserve all 
entailments over the signature of the subset, and thus contain only axioms that in some way interact 
with each 
other~\cite{ATOMIC-DECOMPOSITIONS}. From these subsets of the ontology, which contained between 100 
and 20,979 axioms (median 199), we generated the observations by random selection. 
For each ontology, we created 
150 observations each for the sets \KBexpA and \KBexpB, and 50 observations each for the sets 
\ABoxexpA and~\ABoxexpB. We ignored observations that were already entailed by the 
background ontology, which happened in 3.4\% of cases for \ABoxexp and in 24.9\% of cases 
for~\KBexp.
Signatures for~\ABoxexp and \KBexp were generated by selecting respectively 50\% and 60\% of the 
background ontology's signature, where we made sure at least one name from the observation was not 
an abducible. To reflect the differing relevance of names, each name was chosen with a probability 
proportional to its number of occurrences. For instance, for an ontology describing partonomies, it 
would be unlikely to compute hypotheses without the $\cn{hasPart}$ relation.
Selecting names this way reduced the number of trivial solutions significantly. 
\begin{table}
\small
\centering 
\setlength{\tabcolsep}{1.9mm}
\begin{tabular*}{0.99\linewidth}{@{}cccc|cccc@{}}
\hline
\multicolumn{4}{c|}{TBox size (axioms)} & \multicolumn{4}{c}{ABox size (axioms)}\\
\hline
Min& Max & Mdn & Mean & Min & Max & Mdn & Mean\\
\hline
48 & 36302 & 885 & 3146 & 32 & 42429 & 1424 & 4610\\
\hline
\end{tabular*}
\caption{Characteristics of the experimental corpus (Mdn: median).}
\label{corpora-stats}
\end{table}

\begin{table}%
\small
\centering 
\setlength{\tabcolsep}{0.65mm}
\begin{tabular*}{0.99\linewidth}{@{}l@{}ccccc|ccccc@{}}
\hline
& \multicolumn{5}{c|}{Initial Hypothesis (s.)} & \multicolumn{5}{c}{Indep. Explanations 
(s.)}\\
\hline
& Min & Mdn & Mean & P90 & Max & Min & Mdn & Mean & P90 & Max\\
\hline
\KBexpA & 0.2 & 0.6 & 2.4 & 1.8 & 293.2 & 0.0 & 0.0 & 1.2 & 0.8 & 292.5\\
\KBexpB & 0.2 & 0.4 & 2.5 & 2.1 & 294.8 & 0.0 & 0.0 & 1.1 & 0.6 & 293.4\\
\ABoxexpA & 0.2 & 1.9 & 16.2 & 51.7 & 292.0 & 0.0 & 0.1 & 4.6 & 2.2 & 282.1\\
\ABoxexpB & 0.2 & 1.6 & 16.9 & 49.6 & 296.9 & 0.0 & 0.1 & 4.9 & 2.5 & 293.1\\
\hline
\end{tabular*}
\caption{Running times (Mdn: median, P90: 90th percentile).}
\label{results-time}
\end{table}
\warren{Need better titles in Table 2.}

The experiments were run on an Intel Core i5-4590 CPU machine with 3.30GHz and 32 GB RAM, using 
Debian/GNU Linux 9 and OpenJDK 11.0.5.
For each abduction problem, the timeout was set to 5 minutes. %
The hypotheses obtained after applying the method in Sect.~\ref{sec:forgetting-method} 
(computing optimal hypotheses) and those obtained after additionally applying the method in 
Sect.~\ref{hypothesis-filtering} (removing redundancies) are referred to as \hypinit and \hypindep 
respectively. 
Computation times are shown in Table \ref{results-time}. 
The success rate (no timeout nor out-of-memory exception)
for \KBexpA was 96.3\% for \hypinit and 95.3\% for \hypindep.
For \ABoxexpA, success rates were 91.3\% for \hypinit and 90.1\% for \hypindep. 
For observations of size up to 20, the rates were very similar: \KBexpB/\ABoxexpB got 
success rates of 96.4\%/91.1\% for \hypinit and 95.4\%/89.5\% for~\hypindep. The size of the 
observation thus had only very little impact.
Since we selected the signatures randomly, in a lot of cases they did not fit to the observation 
and no hypothesis according to Def.~\ref{KBAbductionProblem} existed. This was the case for 
64.5\% and 67.4\% of the observations in \ABoxexpA and in \ABoxexpB, and for 72.6\% and 76.1\% of 
the observations in \KBexpA and \KBexpB.
In the following, we focus on the remaining cases.

Table \ref{results-hypotheses} shows statistics regarding the size of computed hypotheses.
Eliminating redundant disjuncts for \hypindep often reduced the sizes of the hypotheses produced 
for both experiment \KBexp and experiment \ABoxexp. 
Table \ref{results-disjunctive} shows statistics about the disjuncts in each hypothesis. For 
\KBexpA/\KBexpB, disjunctions were required to represent \hypinit and \hypindep in 18.0\%/20.9\% 
and 7.1\%/7.7\% of cases respectively. For \ABoxexpA/\ABoxexpB, disjunctions where needed much 
more often: here the values were 47.2\%/43.8\% and 36.8\%/32.4\%. This can be explained by the fact 
that 
disjunctive hypotheses are more easy to obtain from ABox assertions than from CIs, and that many 
observations in \KBexp contained only CIs.
Though our method may introduce inverse roles, nominals and fixpoint operators, this was only 
observed in the minority of cases: inverse roles and nominals were needed in 16.4\%/16.6\% of cases 
for \KBexp and in 1.5\%/1.3\% of cases for \ABoxexp. Fixpoint expressions where needed in 
3.1\%/5.3\% of cases for \KBexp and 0.8\%/1.3\% of cases for~\ABoxexp. 

\begin{table}
\small
\centering 
\setlength{\tabcolsep}{0.85mm}
\begin{tabular*}{0.99\linewidth}{@{}lccccc|ccccc@{}}
\hline
& \multicolumn{5}{c|}{Number of TBox axioms} & \multicolumn{5}{c}{Number of ABox axioms}\\
\hline
& Min & Mdn & Mean & P90 & Max & Min & Mdn & Mean & P90 & Max\\
  \hline
  \multicolumn{11}{c}{\KBexpA}\\
  \hline
  \hypinit &
  0 & 2 & 5.6 & 7 & 846 & 
  0 & 3 & 18.6 & 9 & 10294 \\
  \hypindep & 
  0 & 1 & 2.3 & 5 & 105 &
  0 & 3 & 6.4 & 8 & 4374\\
  \hline
  \multicolumn{11}{c}{\KBexpB}\\
  \hline
  \hypinit &
  0 & 2 & 8.9 & 14 & 1848 & 
  0 & 6 & 42.9 & 19 & 28817 \\
  \hypindep & 
  0 & 2 & 3.9 & 8 & 1848 &
  0 & 5 & 9.0 & 16 & 1212\\
  \hline
  \multicolumn{11}{c}{\ABoxexpA}\\
  \hline
  \hypinit &
  0 & 0 & 3.9 & 0 & 5762 & 
  0 & 9 & 30.5 & 51 & 8296 \\
  \hypindep & 
  0 & 0 & 0.0 & 0 & 5 &
  0 & 8 & 17.1 & 36 & 1176\\
  \hline
  \multicolumn{11}{c}{\ABoxexpB}\\
  \hline
  \hypinit &
  0 & 0 & 2.4 & 0 & 2646 & 
  0 & 14 & 52.2 & 80 & 21146 \\
  \hypindep & 
  0 & 0 & 0.0 & 0 & 12 &
  0 & 12 & 24.2 & 52 & 1536\\
\hline
\end{tabular*}
\caption{TBox and ABox axioms in the computed hypotheses (Mdn: median, P90: 90th percentile).}
\label{results-hypotheses}
\end{table}

\begin{table}
\small
\centering 
\setlength{\tabcolsep}{1.5mm}
\begin{tabular*}{0.99\linewidth}{@{}lcccc|cccc@{}}
\hline
& Mdn. & Mean & P90. & Max & Mdn. & Mean & P90. & Max\\
\hline
& \multicolumn{4}{c|}{\KBexpA} & \multicolumn{4}{c}{\ABoxexpA}\\
\hline
\hypinit & 1 & 7.1 & 3 & 4751 
         & 2 & 8.3 & 10 & 4184\\
\hypindep & 1 & 1.8 & 1 & 729 
          & 2 & 3.7 & 8 & 343 \\
\hline
& \multicolumn{4}{c|}{\KBexpB} & \multicolumn{4}{c}{\ABoxexpB}\\
\hline
\hypinit & 1 & 15.3 & 4 & 13126 
         & 2 & 11.9 & 10 & 7796\\
\hypindep & 1 & 1.8 & 1 & 243 
          & 1 & 3.5 & 7 & 256 \\
\hline
\end{tabular*}
\caption{Number of disjuncts in the computed hypotheses (Mdn: median, P90: 90th percentile).}
\label{results-disjunctive}
\end{table}

\section{Conclusion}
\label{sec:conclusion}

\newcommand{\ALCI}{\ensuremath{\mathcal{ALCI}}\xspace}

We presented the first general method for signature-based abduction on \ALC KBs. At its center lies 
a new resolution-based calculus for Boolean KBs, which is used in combination with a set-of-support 
strategy.
We combined it with a simplification technique ensuring the generation of independent explanations 
and evaluated the overall technique in practice on realistic benchmarks.

While the theoretical complexity of our problem remains open,
we conjecture that the situation is the 
same as for uniform interpolation~\cite{FOUNDATIONS-EXPRESSIVE-UI}, and that optimal, fixpoint-free 
solutions can be of size triple-exponential in the input.
A natural next step is to allow nominals and inverse roles not only in the output, but also in the 
input, which especially for nominals might prove challenging. 
Finally, we plan to use
our approach for abduction-related tasks %
such as gentle repairs~\cite{GENTLE-REPAIRS} and 
induction with iterative refinement~\cite{LehmannHitzler10}.

\subsubsection{Acknowledgements}

Patrick Koopmann and Sophie Tourret are supported by DFG grant 389793660 of TRR 248.
Warren Del-Pinto is a recipient of a RCUK Scholarship.

%
%\bibliographystyle{kr}
%\bibliography{bibliography-abduction-forgetting}

\ifdefined\showAppendix

\newpage

\appendix

\fi
\end{document}